\newtheorem*{rep@theorem}{\rep@title}
\newcommand{\newreptheorem}[2]{%
\newenvironment{rep#1}[1]{%
 \def\rep@title{#2 \ref{##1}}%
 \begin{rep@theorem}}%
 {\end{rep@theorem}}}
\newcommand{\tleftarrow}{\xleftarrow{t}}
\newcommand{\trightarrow}{\xrightarrow{t}}
\newcommand{\indep}{\rotatebox[origin=c]{90}{$\models$}}
\definecolor{WowColor}{rgb}{.75,0,.75}
\definecolor{SubtleColor}{rgb}{0,0,.50}
\newcounter{margincounter}
\title[Typing assumptions improve identification in causal discovery]{Typing assumptions improve identification\\in causal discovery}
\begin{document}

\maketitle

\doparttoc %
\faketableofcontents %

\begin{abstract}
Causal discovery from observational data is a challenging task that can only be solved up to a set of equivalent solutions, called an equivalence class.
Such classes, which are often large in size, encode uncertainties about the orientation of some edges in the causal graph.
In this work, we propose a new set of assumptions that constrain possible causal relationships based on the nature of variables, thus circumscribing the equivalence class.
Namely, we introduce \emph{typed directed acyclic graphs}, in which variable types are used to determine the validity of causal relationships.
We demonstrate, both theoretically and empirically, that the proposed assumptions can result in significant gains in the identification of the causal graph.
We also propose causal discovery algorithms that make use of these assumptions and demonstrate their benefits on simulated and pseudo-real data.
\end{abstract}

\begin{keywords}
  causal discovery, structure learning, identification, background knowledge %
\end{keywords}

\section{Introduction} \label{sec:introduction}

\defcitealias{10002010map}{1KGP, 2010}

Can the temperature of a city alter its altitude~\citep{peters2017elements}?
Can a light bulb change the state of a switch?
Can the brakes of a car be activated by their indicator light~\citep{dehaan2019}?
Chances are, you did not need to think very hard to answer these questions, since you intuitively understand the implausibility of causal relationships between certain \emph{types of entities}.
This form of prior knowledge has been shown to play a key role in causal reasoning~\citep{griffiths2011bayes, schulz2004causal, gopnik2000detecting}.
In fact, in the absence of evidence (e.g., data), humans tend to reason inductively and use domain knowledge to generalize known causal relationships to new, similar, entities~\citep{kemp2010learning}.

Nonetheless, the elucidation of causal relationships often goes beyond human intuition.
The abundance of large-scale scientific endeavors to understand the causes of diseases~\citepalias{10002010map} or natural phenomena~\citep{runge2019inferring} are good examples.
In such cases, computational methods for \textit{causal discovery} may help reveal causal relationships based on patterns of association in data (see \citet{heinze2018causal} for a review).
The most common setting consists of representing causal relationships as a directed acyclic graph where vertices correspond to variables of interest and edges indicate causal relationships.
Additional assumptions, like the \emph{faithfulness} condition, are then made to enable reasoning about graph structures based on conditional independences in the data.
While these enable data-driven causal discovery, the underlying causal graph can only be identified up to its \emph{Markov equivalence class}~\citep{peters2017elements}, which can often be very large~\citep{he2015counting} thus leaving many edges unoriented.

Inspired by how humans use types to reason about causal relationships, this work explores how prior knowledge about the nature of the variables can help reduce the size of such equivalence classes.
Building on the theoretical foundations of causal discovery in directed acyclic graphs, we propose a new theoretical framework for the case where \emph{variables are labeled by a type}.
Such types can be attributed based on prior knowledge, e.g., via a domain expert.
We then make assumptions on how types can interact with each other, which constrains the space of possible graphs and leads to reduced equivalence classes.
We show, both theoretically and empirically, that when such assumptions hold in the data, significant gains in the identification of causal relationships can be made.

\paragraph{Contributions:}
\begin{itemize}[leftmargin=8mm]
    \item We propose a new theoretical framework for causal discovery where possible causal relationships are constrained based on the type of variables (\cref{sec:tdag}).
    \item We prove theoretical results that guarantee the orientation of all inter-type edges and, in certain conditions, the convergence of the equivalence class to a singleton (identification), when the number of vertices tends to infinity and the number of types is fixed (\cref{sec:identification}).
    \item We present simple algorithms to incorporate our type-based assumptions in causal discovery, along with theoretical results that guarantee their consistency (\cref{sec:algorithm}).

    \item We present an empirical study that illustrates the benefits of our proposed algorithms over a baseline that does not consider variable types (\cref{sec:experiments}).
\end{itemize}

\section{Problem formulation} \label{sec:background}

\paragraph{Causal graphical models.} 
In this work, we adopt the framework of causal graphical models (CGM)~\citep{peters2017elements}.
Let $X = (X_1, \dots, X_d)$ be a random vector with distribution $P_X$.
Let $G = (V, E)$ be a directed acyclic graph (DAG) with vertices ${V = \{v_1, \dots, v_d\}}$. 
Each vertex $v_i \in V$ is associated to variable $X_i$ and a directed edge $(v_i, v_j) \in E$ represents a direct causal relationship from $X_i$ to $X_j$. 
We assume that $P_X$ can be factorized according to~$G$, that is, $$p(x_1, \dots, x_d) = \prod_{i=1}^d p(x_i \mid \text{pa}_i^G),$$ where $\text{pa}_i^G$ denotes the parents of $X_i$ in $G$.\footnote{This is a slight abuse of language. Here, we mean the parents of vertex $v_i$ in $G$.}
From this graph, it is possible to estimate quantities of causal nature (e.g., via do-calculus~\citep{pearl1995causal}). %
However, in many situations, the structure of~$G$ is unknown and must be inferred from data. 

\paragraph{Causal discovery.}
The task of causal discovery consists of learning the structure of $G$ based on observations from $P_X$.
Some assumptions are required to make this possible.
By adopting the CGM framework, we assume:
(i) \emph{causal sufficiency}, which states there is no unobserved variable that causes more than one variable in $X$ and
(ii) the \emph{causal Markov property},
which states that $X_i \indep_G X_j \mid Z \implies X_i \indep_{P_X} X_j \mid Z$,
where $Z$ is a set composed of variables in $X$,
$X_i\,\indep_G\,X_j \mid Z$ indicates that $X_i$ and $X_j$ are $d$-separated by $Z$ in $G$,
and $X_i\,\indep_{P_X}\,X_j \mid Z$ indicates that $X_i$ and $X_j$ are independent conditioned on $Z$.
Additionally, we assume (iii) \emph{faithfulness}, 
which states that $X_i\,\indep_{P_X}\,X_j \mid Z \implies X_i\,\indep_G\,X_j \mid Z$.
Hence, conditional independences in the data can be used to learn about the structure of $G$.

\paragraph{Equivalence classes.}
Even with these assumptions, $G$ can only be recovered up to a \emph{Markov equivalence class} (MEC)~\citep{peters2017elements}, which is the set of all the DAGs that encode exactly the same conditional independences as $G$.
The MEC is often characterized graphically using an \emph{essential graph} or \emph{Completed Partially Directed Acyclic Graph} (CPDAG), which corresponds to the union of all Markov equivalent DAGs~\citep{andersson1997characterization}. While two DAGs are Markov equivalent if and only if they have the same skeleton and \textit{v-structures} (also called \textit{immoralities}) ~\citep{verma1990equivalence}, the CPDAG can contain other oriented edges, resulting from constraints such as not creating cycles or additional v-structures.\footnote{For our terminology related to graphs, we refer the reader to the Appendix A of \cite{andersson1997characterization}.} 
In some cases, e.g., for sparse graphs, the size of the MEC can be huge~\citep{he2016formulas, he2015counting}, significantly limiting inference about the direction of edges in $G$.
Hence, it is a problem of key importance to find new realistic assumptions to shrink the equivalence class.

There have been a wealth of approaches to alleviate this problem. For instance, some have made progress by including data collected under intervention~\citep{hauser2012characterization}, making assumptions about the functional form of causal relationships \citep{peters2014causal, shimizu2006linear} or including background knowledge on the direction of edges~\citep{meek1995causal}.
In this work, we propose an alternative approach, based on background knowledge, where types are attributed to variables and the interaction between types is constrained.

\section{Related work} \label{sec:related_work}

The inclusion of background knowledge in causal discovery aims to reduce the size of the solution space by adding or ruling out causal relationships based on expert knowledge.
Several forms of background knowledge have been proposed, which place various levels of burden on the expert.
Below, we outline those most relevant to our work (see \citet{constantinou2021information} for a review).

\paragraph{Hard background knowledge.} This type of background knowledge is ``hard'' in the sense that it \emph{must be respected} in the inferred graph structures.
Previous works have considered: sets of forbidden and known edges~\citep{meek1995causal}, a known ordering of the variables~\citep{cooper1992bayesian}, partial orderings of the variables~\citep{andrews2020completeness,scheines1998tetrad}, and ancestral constraints~\citep{li2018bayesian, chen2016learning}.
Among these, partial orderings (or \emph{tiered background knowledge}) are the most similar to our contribution.
In this setting, it is assumed that an expert partitions the variables into sets called \emph{tiers}, and orders the tiers such that variables in a later tier cannot cause variables in an earlier tier.
In contrast, while we require an expert to partition variables into sets (by type), we do not assume that an ordering is known \textit{a priori} (see \cref{app:examples_ordering_unknown} for examples).

\paragraph{Soft background knowledge.} A setting similar to ours, where the type of each variable dictates its possible causal relationships, is presented by \citet{mansinghka2012structured}. They propose a Bayesian method to use this prior knowledge in causal discovery. Their work shows the benefits of such priors, but does not investigate this space of graphs and their properties w.r.t. to structure identifiability.

\paragraph{Grouping variables.}
\citet{parviainen2017learning} explore a setting similar to ours, where variables representing different ``views'' on the same entity are aggregated into groups. 
The authors address the problem of learning causal relationships between groups of variables, which they represent as \textit{group DAGs}.
Our work is conceptually different.
First, variables of a given type could correspond to different entities that are similar, rather than multiple views on a common entity.
Second, our focus is different: their goal is to recover a group DAG, while ours is to make assumptions that facilitate the identification of the causal graph in the variable space.
Note, however, that their \textit{strong group causality} assumption leads to graphs that are a subset of the consistent t-DAGs that we will present.

Interestingly, several recent works applying causal discovery to real-world problems rely on expert knowledge that is compatible with our proposed framework.
For example, in their work on Alzheimer's disease, \citet{shen2020challenges} claim that ``edges from biomarkers or diagnosis to demographic variables are prohibited'' and that ``edges among demographic variables are prohibited'', 
clearly reasoning about relationships between types of variables.
Similarly, the work of \citet{flores2011incorporating} outlines an application of tiered background knowledge in a medical case study.
Converting this setting to ours simply involves considering each tier as a variable type.
Hence, the typing assumptions that we propose in this work, and the associated theoretical results, constitute a way of incorporating expert knowledge that is applicable in practice.
\begin{figure*}[t!]
    \centering
    \includegraphics[width=0.9\linewidth]{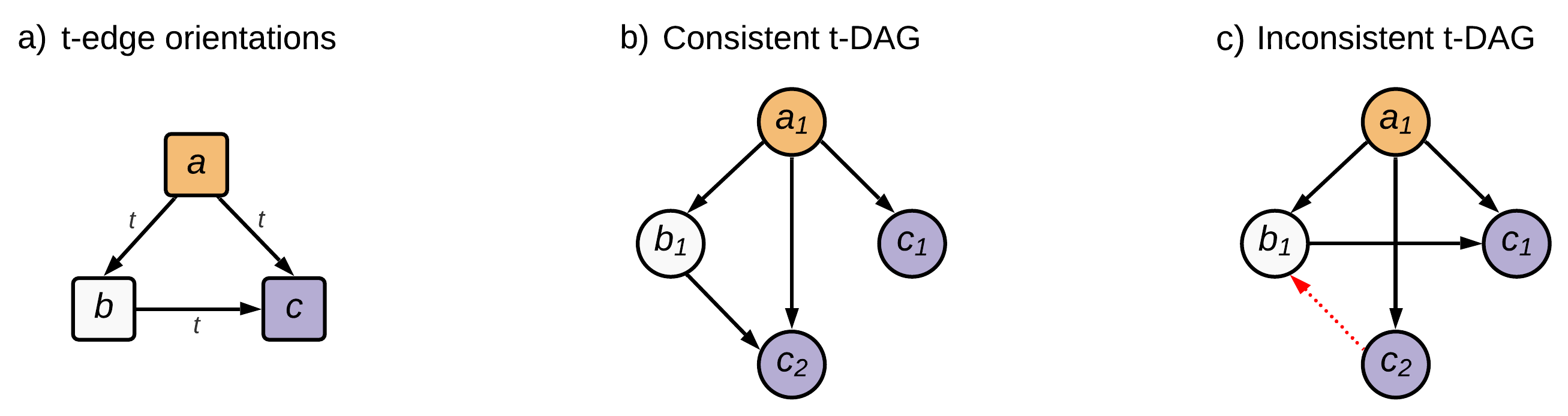}
    \caption{\textbf{(a)} Representation of t-edges orientations, where colors represent the different types $t_a$, $t_b$, and $t_c$. Representation of a t-DAG that is consistent and follows the orientation of the t-edges in (a). \textbf{(c)} Representation of a t-DAG that is not consistent: the red dotted edge $v_{c_2} \rightarrow v_{b_1}$ is not consistent with $v_{b_1} \rightarrow v_{c_1}$ (\cref{def:consistent tdag}).}
    \label{fig:tdag_cartoon}
\end{figure*}
\section{Typed directed acyclic graphs} \label{sec:tdag}

Our work builds on two fundamental structures: \emph{typed} directed acyclic graphs (t-DAG), which are essentially DAGs with typed vertices; and \emph{t-edges}, which are sets of edges relating vertices of distinct types. Formal definitions follow.

\begin{definition}[t-DAG]\label[definition]{def:tdag}
A \emph{t-DAG} $D_T$ with $k$ types is a DAG $D := (V, E)$
augmented with a mapping $T: V \rightarrow \mathcal{T}$ such that the type of $v_i \in V$ is $T(v_i) = t_j \in \mathcal{T}$, where $|\mathcal{T}| = k$.
\end{definition}

\begin{definition}[t-edge]\label[definition]{def:tedge}
A \emph{t-edge} $E(t_i, t_j)$ is the set of edges that goes from a vertex of type~$t_i$ to a vertex of type~$t_j$. More formally,  
$E(t_i, t_j) = \{ (v_k, v_l) \in E \mid T(v_k) = t_i, T(v_l) = t_j \}$ for any pair of types $t_i, t_j \in \mathcal{T}$, s.t., $t_i \not= t_j$.
\end{definition}

For example, the graphs illustrated in \cref{fig:tdag_cartoon}~(b) and (c) are t-DAGs where colors represent types and the set $E(t_a, t_c) = \{(v_{a_1}, v_{c_1}), (v_{a_1}, v_{c_2})\}$ is a t-edge between types $t_a$ and $t_c$.\footnote{To keep the figures simple and readable, throughout the paper we label the vertices of the t-DAGs with the subscripts of the variables (or types) they represent. For example, vertex $a_i$ refers to variable $v_{a_i}$ and vertex $a$ refers to type $t_a$.}

\subsection{Assumptions on type interactions}

We now introduce a new assumption: \emph{type consistency}, which constrains the possible causal relationships that may arise between typed variables.
Put simply, this assumption states that causal relationships between two types of variables can only arise in one common direction.\footnote{See~\cref{app:relaxations} for a discussion of variations and relaxations of this typing assumption.}

\begin{definition}[Consistent t-DAG]\label[definition]{def:consistent tdag}
A consistent t-DAG is a t-DAG where, for every pair of distinct types $t_i, t_j$, if t-edge $E(t_i, t_j) \neq \emptyset$ then we have that $E(t_j, t_i) = \emptyset$. We refer to this structural constraint as type consistency.
For conciseness,  $t_i \xrightarrow{t} t_j$ denotes $E(t_i, t_j) \neq \emptyset$.
\end{definition}

In \cref{fig:tdag_cartoon}~(b), we present an example of a consistent t-DAG. In contrast, the t-DAG shown in \cref{fig:tdag_cartoon}~(c) is not consistent: the t-edge $E(t_c, t_b)$ (purple to white) contains the edge $(v_{c_2}, v_{b_1})$, while the reverse t-edge, $E(t_b, t_c)$, is not empty since it contains $(v_{b_1}, v_{c_1})$.
Notice how the orientation of all {t-edges} (\cref{fig:tdag_cartoon} (a)) fully determines the orientation of edges between variables of distinct types in a consistent t-DAG. %

Note that alternative assumptions could have been considered.
For instance, we could have assumed that t-edges form a DAG (i.e., the types have a partial ordering).
However, the assumptions considered here are less restrictive and, as we demonstrate later, lead to interesting results.

\subsection{Equivalence classes for consistent t-DAGs}
We define the equivalence classes MEC and t-MEC as the set of DAGs and the set of consistent t-DAGs that are Markov equivalent, respectively.

\begin{definition}[MEC] \label{def:mec}
The MEC of a t-DAG $D_T$ is
$M(D_T) := \{ D' \mid D' \sim D_T \}$
where ``$\sim$'' denotes Markov equivalence.
\end{definition}

\begin{definition}[t-MEC] \label{def:t_mec}
The t-MEC of a consistent t-DAG $D_T$ is
$M_T(D_T) := \{ D_T' \mid D_T' \overset{t}{\sim} D_T \}$ where ``$\overset{t}{\sim}$'' denotes Markov equivalence limited to consistent t-DAGs with the same type mapping $T$.
\end{definition}

To represent an equivalence class, we can use an \emph{essential graph}, which corresponds to the union of equivalent DAGs.
The union over graphs is defined as the union of their vertices and edges: $G_1 \cup G_2 := (V_1 \cup V_2, E_1 \cup E_2)$. Also, if $(v_i, v_j), (v_j, v_i) \in E_1 \cup E_2$, then the edge is considered to be undirected.

\begin{definition}[Essential graph]
The essential graph $D^*$ associated to the consistent \mbox{t-DAG} $D_T$ is $$ D^* := \bigcup\limits_{D \in M(D_T)}D.$$
\end{definition}

\begin{definition}[t-Essential graph]\label[definition]{def:tmec}
The t-essential graph $D_T^*$ associated to the consistent {t-DAG} $D_T$ is $$ D_T^* := \bigcup_{D \in M_T(D_T)}D.$$
\end{definition}

\subsection{t-Essential graph properties and size of t-MEC}
We consider some statements that can directly be made about t-essential graphs and the size of t-MEC with respect to their non-typed counterparts. Proofs for the propositions can be found in~\cref{app:proof_proposition}.
First note that for t-DAGs with $k$ types and $d$ vertices, in the limit cases where each variable belongs to a distinct type ($k = d$) or all variables belong to a single type ($k = 1$), type consistency does not impose structural constraints on t-DAGs, i.e., any {t-DAG} is type-consistent and the t-essential graph is identical to the essential graph.

\begin{figure}
    \centering
    \includegraphics[width=0.45\linewidth]{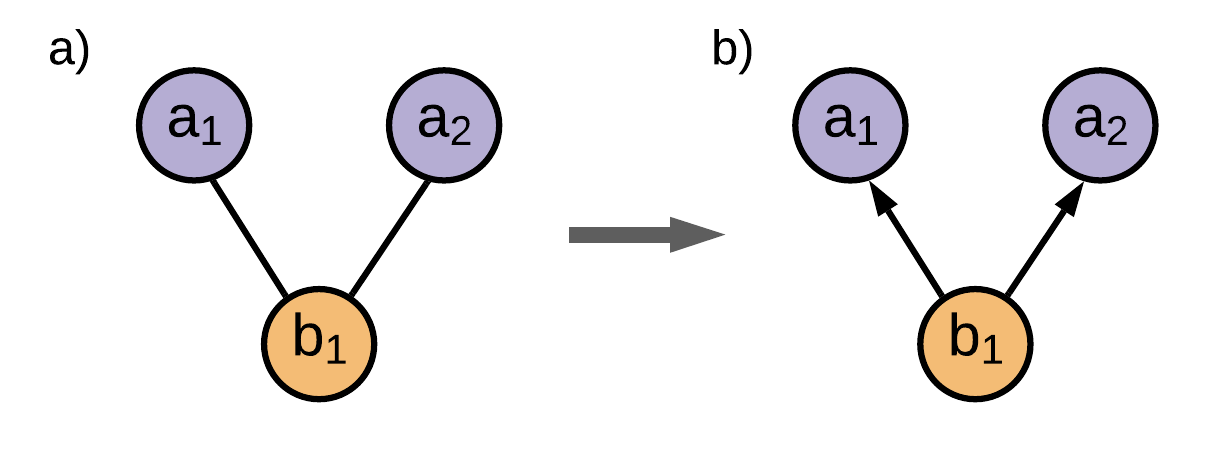}
    \vspace{-3mm}
    \caption{\textbf{(a)} The two-type fork structure. In this illustration, the vertices $v_{a_1}$ and $v_{a_2}$ are of type $t_a$ (purple) and $v_{b_1}$ is of type $t_b$ (orange). 
    \textbf{(b)} Orientation rule: if this structure is encountered in an essential graph, it must be oriented in the t-essential graph.
    }
    \label{fig:two_type_fork}
\end{figure}

However, in general, the t-essential graph is a version of the essential graph with more oriented edges, thanks to the type consistency assumption:

\begin{proposition}
\label[proposition]{prop:essential_subset}
Let $D^*_T$ and $D^*$ be, respectively, the {t-essential} and essential graphs of an arbitrary consistent t-DAG $D_T$.
Then, $D_T \subseteq D^*_T \subseteq D^*$.
\end{proposition}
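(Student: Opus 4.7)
The plan is to prove the two inclusions separately, each by a direct appeal to the definitions of the respective equivalence classes and the graph union operation.

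For the first inclusion $D_T \subseteq D^*_T$, I would observe that Markov equivalence (restricted to consistent t-DAGs with the given type map $T$) is reflexive, so $D_T \overset{t}{\sim} D_T$. Since $D_T$ is consistent by hypothesis, this gives $D_T \in M_T(D_T)$. By \cref{def:tmec}, $D^*_T$ is the union over all members of $M_T(D_T)$, and the union operation $(V_1\cup V_2, E_1 \cup E_2)$ is monotone: any single graph in the collection is contained in the union. The only mild subtlety is that a directed edge $v_i \to v_j$ of $D_T$ may appear as an undirected edge in $D^*_T$, but in the convention given in the excerpt an undirected edge $v_i \mathdash v_j$ contains both ordered pairs $(v_i,v_j)$ and $(v_j,v_i)$, so containment of edge sets still holds.

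For the second inclusion $D^*_T \subseteq D^*$, the key step is to show that $M_T(D_T)$, viewed as a collection of (underlying) DAGs, is a subset of $M(D_T)$. This is immediate from the definitions: t-Markov equivalence requires ordinary Markov equivalence plus the additional conditions of type consistency and sharing the type mapping $T$, so any $D'_T \in M_T(D_T)$, when stripped of its type labels, gives a DAG $D' \sim D_T$ and hence $D' \in M(D_T)$. Consequently,
\begin{equation*}
    D^*_T \;=\; \bigcup_{D'_T \in M_T(D_T)} D'_T \;\subseteq\; \bigcup_{D \in M(D_T)} D \;=\; D^*,
\end{equation*}
again by monotonicity of the graph union.

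The argument is essentially definitional rather than structural, so I do not anticipate a real obstacle; the only thing to be careful about is the interaction between the graph-union convention and the mixed-edge bookkeeping (directed vs.\ undirected), which simply requires noting that every directed edge of a member of the union is present as either a directed or an undirected edge of the union, and this respects the desired subset relation.
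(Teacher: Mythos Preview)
Your proof is correct and follows essentially the same definitional route as the paper: both arguments rest on the observation that $M_T(D_T)\subseteq M(D_T)$ (the paper phrases this as ``enforcing type consistency can only orient more edges''), together with $D_T$ being a member of its own equivalence class. Your version is simply more explicit about the union/containment bookkeeping than the paper's two-line sketch.
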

 
Indeed, type consistency synchronizes the orientation of some edges, resulting in a reduced set of possible orientations.
Some structural properties of the graph may also force the orientation of edges in the t-essential graph.
For instance, akin to v-structures in essential graphs, \textit{two-type forks} (see \cref{fig:two_type_fork}) must be oriented in t-essential graphs.

\begin{proposition}\label[proposition]{prop:mickeymouse}
If a consistent t-DAG $D_T$ contains vertices $v_{a_1}, v_{a_2}, v_{b_1}$ 
with types  $T(v_{a_1}) = T(v_{a_2}) = t_a$, ${T(v_{b_1}) = t_b}$ and $t_a \not= t_b$, 
with edges $v_{a_1} \leftarrow v_{b_1} \rightarrow v_{a_2}$ ($v_{a_1}, v_{a_2}$ not adjacent), then the t-edge $t_b \trightarrow t_a$ is directed in the t-essential graph, i.e., the direction of causation between types~$t_b$ and $t_a$ is known.
\end{proposition}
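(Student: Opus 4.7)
The plan is to exploit the fact that although the configuration $v_{a_1} \leftarrow v_{b_1} \rightarrow v_{a_2}$ is \emph{not} a v-structure (so Markov equivalence alone does not fix its orientation), the only Markov-equivalent reorientations would place edges between types $t_a$ and $t_b$ in opposite directions, violating type consistency. Therefore, within the t-MEC, only the original fork orientation survives, and this will show that every consistent t-DAG in $M_T(D_T)$ has $t_b \trightarrow t_a$.

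Concretely, I would proceed as follows. First, pick an arbitrary $D_T' \in M_T(D_T)$. Since $D_T' \in M(D_T)$ as well, $D_T'$ has the same skeleton and the same set of v-structures as $D_T$. In particular, the edges $v_{b_1} - v_{a_1}$ and $v_{b_1} - v_{a_2}$ are present in $D_T'$, and because $v_{a_1}$ and $v_{a_2}$ are non-adjacent and do not form a v-structure at $v_{b_1}$ in $D_T$, they also do not form one in $D_T'$. Second, I would enumerate the three orientations of the path $v_{a_1} - v_{b_1} - v_{a_2}$ that do not create a v-structure at $v_{b_1}$:
\begin{enumerate}[leftmargin=8mm]
\item $v_{a_1} \to v_{b_1} \to v_{a_2}$: induces edges in both $E(t_a,t_b)$ and $E(t_b,t_a)$, contradicting type consistency.
\item $v_{a_1} \leftarrow v_{b_1} \leftarrow v_{a_2}$: symmetrically, induces edges in both $E(t_a,t_b)$ and $E(t_b,t_a)$, contradicting type consistency.
\item $v_{a_1} \leftarrow v_{b_1} \to v_{a_2}$: both edges lie in $E(t_b,t_a)$, which is compatible with type consistency.
\end{enumerate}
Thus only orientation (3) is admissible in any consistent t-DAG in $M_T(D_T)$, so the edges $v_{b_1} \to v_{a_1}$ and $v_{b_1} \to v_{a_2}$ are present in every element of $M_T(D_T)$. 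Consequently $E(t_b,t_a) \neq \emptyset$ in every such t-DAG, and by the definition of consistency $E(t_a,t_b) = \emptyset$. Taking the union over $M_T(D_T)$ (\cref{def:tmec}), every edge in $D_T^\ast$ belonging to the t-edge between $t_a$ and $t_b$ is oriented from $t_b$ to $t_a$, so the t-edge $t_b \trightarrow t_a$ is directed in the t-essential graph.

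I do not anticipate a serious obstacle here: the argument is essentially a case analysis on the three non-v-structure orientations of a two-edge path. The only subtlety worth flagging explicitly in the writeup is that one must invoke Markov-equivalence preservation of both the skeleton \emph{and} the v-structures in order to rule out the fourth orientation $v_{a_1} \to v_{b_1} \leftarrow v_{a_2}$, which uses the hypothesis that $v_{a_1}$ and $v_{a_2}$ are non-adjacent. After that, the type-consistency constraint (\cref{def:consistent tdag}) does the rest by eliminating the chain and reverse-chain orientations.
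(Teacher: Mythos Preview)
Your proposal is correct and follows essentially the same approach as the paper's proof: both arguments combine (i) Markov equivalence preserving the skeleton and v-structures to rule out $v_{a_1}\to v_{b_1}\leftarrow v_{a_2}$, and (ii) type consistency to force the two $t_a$--$t_b$ edges to point the same way. The only cosmetic difference is that the paper organizes the case analysis at the t-edge level (arguing that $t_b \tleftarrow t_a$ would force the v-structure), whereas you enumerate the four edge-level orientations directly and eliminate the two chains via consistency; the underlying logic is identical.
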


To see this, note that, under type consistency, there are only two possible orientations: $v_{a_1} \rightarrow v_{b_1} \leftarrow v_{a_2}$ and $v_{a_1} \leftarrow v_{b_1} \rightarrow v_{a_2}$. The first is a v-structure and, thus, must be oriented in the essential graph. If it is not, there is only one possible alternative orientation. Therefore, such edges are always oriented in the t-essential graph.

Furthermore, we can upper bound the size of the t-MEC based on the number of undirected edges in the t-essential graph, as stated in the following proposition.

\begin{proposition}[Upper bound on the size of the t-MEC]
\label[proposition]{prop:random-tmec-ubound}
For any consistent t-DAG~$D_t$, we have $|M_T(D_T)| \leq 2^u \prod_{t_i \in \mathcal{T}} 2^{u_{t_i}}$, where~$u$ and $u_{t_i}$ are respectively the number of undirected t-edges and the number of undirected edges between variables of type $t_i$ (intra-type edges) in the t-essential graph of~$D_t$.
\end{proposition}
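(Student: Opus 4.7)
My plan is to count the elements of $M_T(D_T)$ by counting the number of ways to orient the undirected edges of $D_T^*$, and then observe that not every such orientation yields a valid member of the t-MEC, which gives an upper bound.

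The first step is to note that by \cref{def:tmec}, every $D_T' \in M_T(D_T)$ shares its skeleton with $D_T^*$ and agrees with $D_T^*$ on all edges that are already oriented in $D_T^*$; the only freedom in producing $D_T'$ lies in choosing an orientation for each edge that is undirected in $D_T^*$. Thus $|M_T(D_T)|$ is at most the number of orientations of the undirected edges of $D_T^*$ that are consistent with the type assumption.

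The second step is to separate the undirected edges into inter-type and intra-type groups. For inter-type edges, type consistency (\cref{def:consistent tdag}) requires that every edge in a given t-edge $E(t_i, t_j)$ must point in the same direction. Consequently, if a t-edge $E(t_i, t_j)$ is undirected in $D_T^*$ (meaning none of its constituent edges are oriented there), then all of its edges must be simultaneously oriented either $t_i \trightarrow t_j$ or $t_j \trightarrow t_i$ in any $D_T' \in M_T(D_T)$. This contributes at most $2$ choices per undirected t-edge, yielding the factor $2^u$. For intra-type undirected edges within type $t_i$, type consistency imposes no constraint linking them (both endpoints share the same type), so each such edge can independently be oriented in at most two ways, yielding the factor $\prod_{t_i \in \mathcal{T}} 2^{u_{t_i}}$.

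Multiplying the two factors gives the claimed upper bound $2^u \prod_{t_i \in \mathcal{T}} 2^{u_{t_i}}$. The inequality (rather than equality) accounts for the fact that some of these orientations may fail to yield a DAG (introducing cycles), may create new v-structures that violate Markov equivalence with $D_T$, or may produce t-DAGs whose other t-edges violate consistency. The only step that requires care is the bookkeeping in the inter-type case: one must verify that an undirected t-edge truly appears as a block of undirected individual edges in $D_T^*$ rather than as a mixture of directed and undirected ones, which follows because if any edge in $E(t_i, t_j)$ were oriented in every element of $M_T(D_T)$, type consistency would force the same orientation on all edges of that t-edge across the whole t-MEC, and they would all be directed in $D_T^*$.
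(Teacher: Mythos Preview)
Your proof is correct and follows essentially the same approach as the paper's: both argue that each member of $M_T(D_T)$ is determined by orienting the undirected edges of $D_T^*$, that type consistency collapses each undirected t-edge to a single binary choice (yielding $2^u$), that intra-type undirected edges contribute $2^{u_{t_i}}$ each, and that the product is only an upper bound because some orientations create cycles or new v-structures. Your final paragraph on the bookkeeping of undirected t-edges (ruling out a mixture of directed and undirected edges within one t-edge) is a nice explicit justification that the paper leaves implicit.
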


From this bound, we can also directly conclude that if the t-essential graph contains no undirected edges, then $|M_T(D_T)| = 1$. In other words, $D_T$ is identified.

\section{Identification for random graphs} \label{sec:identification}

In this section, we explore the benefits of variable typing in causal graph identification through the study of a class of graphs generated at random based on a process inspired by the Erd\H{o}s-R\'enyi random graph model~\citep{erdos1959}.

Assume we are given a set of $k$ types $t_1,\dots,t_k$, 
probabilities $p_1,\dots,p_k \in (0, 1)^k$ of observing each type s.t. $\sum p_i = 1$, 
and a \emph{type interaction matrix} $A \in [0,1]^{k \times k}$ where
each cell $(i, j)$ is the probability $p_{ij}$ that a variable of type $t_i$ is a direct cause of a variable of type $t_j$.
As per \cref{def:consistent tdag} (type consistency), we impose that $\forall i \not= j$, if $p_{ij} > 0$, then $p_{ji} = 0$.

\begin{definition}[Random sequence of growing t-DAG]\label[definition]{def:random_tdags}
We define a random sequence of t-DAGs $(D^n_{T^n})_{n=0}^\infty$ with ${D^n_{T^n} = (V^n, E^n)}$ and $|V^n|=n$, such that $D^0_{T^0} = (\emptyset, \emptyset)$. 
Each new \mbox{t-DAG} $D^{n}_{T^n}$ in the sequence is obtained from $D^{n-1}_{T^{n-1}}$ as follows:
Create a new vertex $v_n$ and sample its type $t$ from a categorical distribution with probabilities $p_1, ..., p_k$.
Let $V^n = V^{n-1} \bigcup \{v_n\}$. Let $T^n(v_n) = t$ and $T^n(v_j) = T^{n-1}(v_j), \forall v_j \in V^{n-1}$;
To obtain $E^n$, for every vertex $v_i \in V^{n-1}$, add the edge $(v_i, v_n)$  to $E^{n-1}$ with probability $p_{T^n(v_i), T^n(v_n)}$. 
\end{definition}

Our main theorem below states that as we add more vertices to such a growing sequence of random t-DAGs, we eventually discover the orientation of all t-edges. We defer the proof to \cref{app:proof_t-edge_convergence}.

\begin{theorem}\label[theorem]{thm:tedges_zero}
Let $(D_{T^n}^n)_{n=0}^\infty$ be a random sequence of growing t-DAGs as defined in \cref{def:random_tdags}, let $U$ be the number of unoriented  t-edges, and let $r_{ij} = -\tfrac{1}{3} \max\big[\ln (1-p_i),  \ln \big(1-p_j p_{ij}(1-p_{jj})\big)\big]$. For $n\geq3$, $p_i, p_j \in (0,1)$, and $p_{ij}, p_{jj} \in [0,1]$, we have: 
\begin{align*}
    P(U > 0) \leq 4 \sum_{i,j \; : \: i \neq j,  p_{ij} > 0} e^{-r_{ij} n}.
\end{align*}
\end{theorem}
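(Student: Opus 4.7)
By the union bound over pairs $(i,j)$ with $p_{ij} > 0$, it suffices to bound $P(\text{t-edge } t_i \trightarrow t_j \text{ unoriented}) \leq 4 e^{-r_{ij} n}$ for a fixed pair. The key hook is \cref{prop:mickeymouse}: the t-edge $t_i \trightarrow t_j$ is oriented in the t-essential graph as soon as there exists a single vertex $v^\ast$ of type $t_i$ with two $t_j$-children that are non-adjacent. So I want to upper bound the probability that no such two-type fork of the appropriate types exists in the random growing t-DAG.

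To that end, split the $n$ vertices into three disjoint consecutive groups $G_1, G_2, G_3$ (in the order they are generated), each of size at least $\lfloor n/3 \rfloor$, and look for a fork through a three-step cascade.
\emph{Step 1:} look for a vertex of type $t_i$ in $G_1$; this fails with probability $(1-p_i)^{|G_1|}$ since types are i.i.d. Categorical with mass $p_i$ on $t_i$.
\emph{Step 2:} conditional on $v^\ast$ existing (take it to be the first $t_i$-vertex in $G_1$), search $G_2$ for a $t_j$-child of $v^\ast$. For each $v_k \in G_2$, independently of what happened earlier and of the other $v_k$'s, the joint event ``$v_k$ is of type $t_j$'' and ``edge $v^\ast \to v_k$'' has probability $p_j \, p_{ij}$; so step 2 fails with conditional probability at most $(1-p_j p_{ij})^{|G_2|}$.
\emph{Step 3:} conditional on steps 1 and 2 succeeding, fix such a first $t_j$-child $u^\ast \in G_2$ and search $G_3$ for a $w$ that is a $t_j$-child of $v^\ast$ with no edge $u^\ast \to w$. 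For each $w \in G_3$, this joint event has probability $p_j \, p_{ij} \, (1-p_{jj})$, again independently across $w$; so step 3 fails with conditional probability at most $(1-p_j p_{ij}(1-p_{jj}))^{|G_3|}$. Whenever the three steps succeed, $v^\ast \to u^\ast$, $v^\ast \to w$, and $u^\ast, w$ are non-adjacent (the only edge that could exist between them is $u^\ast \to w$, which is ruled out), so \cref{prop:mickeymouse} directly orients $t_i \trightarrow t_j$.

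By the chain-rule union bound and the elementary inequality $(1-p_j p_{ij})^m \leq (1-p_j p_{ij}(1-p_{jj}))^m$,
\[
P(\text{t-edge unoriented}) \leq (1-p_i)^{|G_1|} + 2\,\bigl(1-p_j p_{ij}(1-p_{jj})\bigr)^{\min(|G_2|,|G_3|)} \leq 3 \max\!\bigl[(1-p_i)^{n/3},\,(1-p_j p_{ij}(1-p_{jj}))^{n/3}\bigr],
\]
where the last inequality absorbs the rounding from $\lfloor n/3 \rfloor$ versus $n/3$ into the constant (going from $3$ to $4$ gives the needed slack, using $n\geq 3$ and $p_i,p_j,p_{ij},p_{jj}\in [0,1]$). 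Identifying the $\max$ with $e^{-r_{ij} n}$ and finally summing over the $(i,j)$ with $p_{ij}>0$ gives the stated bound.

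\textbf{Main obstacle.} The conceptual part (finding the right orienting substructure) is immediate from \cref{prop:mickeymouse}; the real care is in justifying the conditional-independence claims in steps 2 and 3. The generative model of \cref{def:random_tdags} draws the type of $v_k$ independently of everything prior and then draws each edge $v_i \to v_k$ independently given the types, so conditioning on the identities of $v^\ast$ and $u^\ast$ (measurable w.r.t.\ earlier groups) leaves the distribution of types and incoming edges within later groups intact, yielding the required product forms. The only other bit of bookkeeping is choosing the partition cleanly enough that the constants come out to $4$ for every $n \geq 3$ rather than merely asymptotically.
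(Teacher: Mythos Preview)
Your argument is essentially the paper's: partition the first $n$ vertices into three consecutive blocks of size $\approx n/3$, locate a $t_i$-vertex $v^\ast$ in the first block, a $t_j$-child $u^\ast$ of $v^\ast$ in the second, and a second $t_j$-child $w$ of $v^\ast$ non-adjacent to $u^\ast$ in the third, then invoke \cref{prop:mickeymouse} and union-bound over $(i,j)$. The only algebraic difference is that the paper lower-bounds the success probability via the product $(1-\alpha^m)(1-\beta^m)(1-\gamma^m)$ with $\alpha=1-p_i$, $\beta=1-p_jp_{ij}$, $\gamma=1-p_jp_{ij}(1-p_{jj})$ and, after expanding and dropping the positive cross terms, keeps four additive error terms $\alpha^m+\beta^m+\gamma^m+\alpha^m\beta^m\gamma^m$; your chain-rule decomposition yields only three, which is slightly tighter and equally valid.

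One caveat: your claim that passing from constant $3$ to $4$ absorbs the $\lfloor n/3\rfloor$-versus-$n/3$ rounding is not correct as stated. For $x\in(0,1)$ one has $x^{\lfloor n/3\rfloor}/x^{n/3}\le x^{-1}$, which is unbounded as $x\to 0$ (e.g.\ $n=4$ and $x=1-p_i$ small), so no fixed multiplicative constant suffices. The paper sidesteps this by simply writing $m=n/3$ throughout and never discussing integrality, so this is a shared cosmetic looseness rather than a defect specific to your write-up; if you want a fully rigorous version, carry $\lfloor n/3\rfloor$ into the exponent of $r_{ij}$ or restrict to $3\mid n$.
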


To give an intuition of the proof, recall \cref{prop:mickeymouse}, which tells us that any two-type fork structure must be oriented in the t-essential graph, thereby orienting the associated t-edge. 
We thus argue that, as we add more vertices, the probability of observing a two-type fork for arbitrary type pairs converges to $1$.
This argument relies on the fact that the number $k$ of types remains constant throughout as the random t-DAG grows.

From~\cref{thm:tedges_zero} we can derive a result for the case where variables of the same type do not interact ($p_{ii} = 0, \forall i$).
In this case, the t-MEC collapses to a singleton as the graph grows, resulting in %
identification of the true t-DAG.

\begin{corollary}\label[corollary]{cor:identification_pintra_zero}
Let $(D_{T^n}^n)_{n=0}^\infty$ be a random sequence of growing t-DAGs as defined in \cref{def:random_tdags} and let $r_{ij} = -\tfrac{1}{3} \max\big[\ln (1-p_i),  \ln \big(1-p_j p_{ij}\big)\big]$. For $n\geq3$, $p_i, p_j \in (0,1)$, and $p_{ij} \in [0,1]$, the size of the t-MEC converges to $1$ exponentially fast: 
\begin{align*}
    P(|M_T(D^n_{T^n})| > 1) \leq 4 \sum_{i,j \; : \: i \neq j,  p_{ij} > 0} e^{-r_{ij} n}.
\end{align*}
\end{corollary}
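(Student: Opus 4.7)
The plan is to reduce the corollary directly to \cref{thm:tedges_zero} via the size bound in \cref{prop:random-tmec-ubound}. First I would observe that the hypothesis $p_{ii} = 0$ for all $i$ means that in the generative process of \cref{def:random_tdags}, no intra-type edges can ever be sampled: every candidate edge between a newly added vertex $v_n$ and a previous vertex $v_i$ of the same type is added with probability $p_{T^n(v_i), T^n(v_n)} = p_{ii} = 0$. Consequently the essential graph contains no intra-type edges, and a fortiori the t-essential graph has $u_{t_i} = 0$ undirected intra-type edges for every type $t_i$.

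Next I would invoke \cref{prop:random-tmec-ubound}, which gives
\begin{equation*}
    |M_T(D^n_{T^n})| \;\leq\; 2^U \prod_{t_i \in \mathcal{T}} 2^{u_{t_i}} \;=\; 2^U,
\end{equation*}
where $U$ is the number of undirected t-edges in the t-essential graph. Therefore the event $\{|M_T(D^n_{T^n})| > 1\}$ is contained in the event $\{U \geq 1\} = \{U > 0\}$, giving the monotone containment
\begin{equation*}
P(|M_T(D^n_{T^n})| > 1) \;\leq\; P(U > 0).
\end{equation*}

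Finally I would specialize \cref{thm:tedges_zero} to $p_{jj} = 0$. The rate constant in the theorem is
\begin{equation*}
r_{ij} = -\tfrac{1}{3} \max\!\bigl[\ln(1-p_i),\; \ln(1 - p_j\, p_{ij}(1-p_{jj}))\bigr],
\end{equation*}
which collapses to $r_{ij} = -\tfrac{1}{3} \max[\ln(1-p_i),\, \ln(1 - p_j p_{ij})]$ when $p_{jj} = 0$, matching exactly the rate stated in the corollary. Combining this with the previous display yields the claimed bound.

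Since \cref{thm:tedges_zero} is assumed known, there is essentially no obstacle here: the whole argument is a structural observation (no intra-type edges can be sampled) plus a one-line substitution into the bound. The only thing to be slightly careful about is verifying that the hypotheses of \cref{thm:tedges_zero} ($n \geq 3$, $p_i, p_j \in (0,1)$, $p_{ij} \in [0,1]$) are preserved by the specialization, and that \cref{prop:random-tmec-ubound} is genuinely applicable—i.e., that the graph $D^n_{T^n}$ produced by the generative process is indeed a consistent t-DAG, which follows from the constraint $p_{ij} > 0 \Rightarrow p_{ji} = 0$ built into the definition of the interaction matrix.
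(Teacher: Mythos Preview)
Your proposal is correct and follows exactly the approach the paper intends: observe that $p_{ii}=0$ eliminates intra-type edges, apply \cref{prop:random-tmec-ubound} to reduce $\{|M_T|>1\}$ to $\{U>0\}$, and then specialize \cref{thm:tedges_zero} with $p_{jj}=0$. The paper does not give a separate formal proof of the corollary, but the surrounding text makes clear this is precisely the derivation they have in mind.
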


\subsection{Empirical validation} \label{sec:empirical_validation}

We conduct an empirical study to validate these theoretical results and further compare the size of the MEC and t-MEC for the case where $p_{ii} > 0$.
As such, we consider t-DAGs of various sizes, randomly generated according to the process described in \cref{def:random_tdags}.
Let $k$ be the number of types in the t-DAG.
We attribute uniform probability to each type, i.e., $p_i=1/k$, $\forall i \in \{1, ..., k\}$.
The type interaction matrix $A$ is defined as follows.
For each pair of types $(t_i, t_j)$, s.t., $i \not= j$, the direction of the t-edge is sampled randomly with uniform probability and we use a fixed probability of interaction $p_{\text{inter}}$.
For example, if the direction $t_i \xrightarrow{t} t_j$ is sampled, then $A_{ij} = p_{\text{inter}}$ and $A_{ji} = 0$.
Furthermore, for each type $t_i$, we attribute a fixed probability $p_{\text{intra}}$ for the occurrence of edges between variables of type $t_i$.

\cref{fig:theory-experiments} shows results for t-DAGs with $n=\{10 \dots 100\}$ vertices, $k=10$ types, $p_{\text{inter}}=0.2$, and various values for $p_{\text{intra}}$ (see \cref{app:random-tdag-experiments} for additional results).
In \cref{fig:theory_tedges_zero}, we clearly see that as, the number of vertices grows, the number of unoriented t-edges tends to zero irrespective of the value of $p_{\text{intra}}$, supporting the statement of \cref{thm:tedges_zero}. 
Furthermore, \cref{fig:theory_identification} clearly shows that for the case where $p_{\text{intra}} = 0$, the size of the t-MEC tends to $1$ (identification) as the number of vertices grows, supporting the statement of \cref{cor:identification_pintra_zero}. 
In sharp contrast, the size of the MEC grows with the number of vertices. Finally, \cref{fig:theory_mec_ratios} shows that the size of the t-MEC can be much smaller than that of the MEC, even when the t-DAG contains intra-type edges ($p_{\text{intra}} > 0$), for which we do not have orientation guarantees.
Interestingly, this also holds when the t-DAGs are dominated by intra-type edges (e.g., $p_{\text{intra}}=0.5$).

It remains an open question to formally quantify the size of the t-MEC vs. the MEC when the graphs contain intra-type edges.
The results in \cref{fig:theory_mec_ratios} suggests that their ratio may be bounded by a quantity that depends on $p_{\text{intra}}$.

\begin{figure}[htbp]
    \floatconts
    {fig:theory-experiments}%
    {
        \caption{\vspace{-6mm} (a) Number of unoriented t-edges w.r.t. the number of vertices (b) Size of the MEC and the t-MEC w.r.t. the number of vertices when $p_{\text{intra}} = 0$ (c) Ratio of the size of t-MEC and the size of MEC w.r.t. the number of vertices.} 
    }
    {%
        \subfigure{%
            \label{fig:theory_tedges_zero}\includegraphics[scale=0.63]{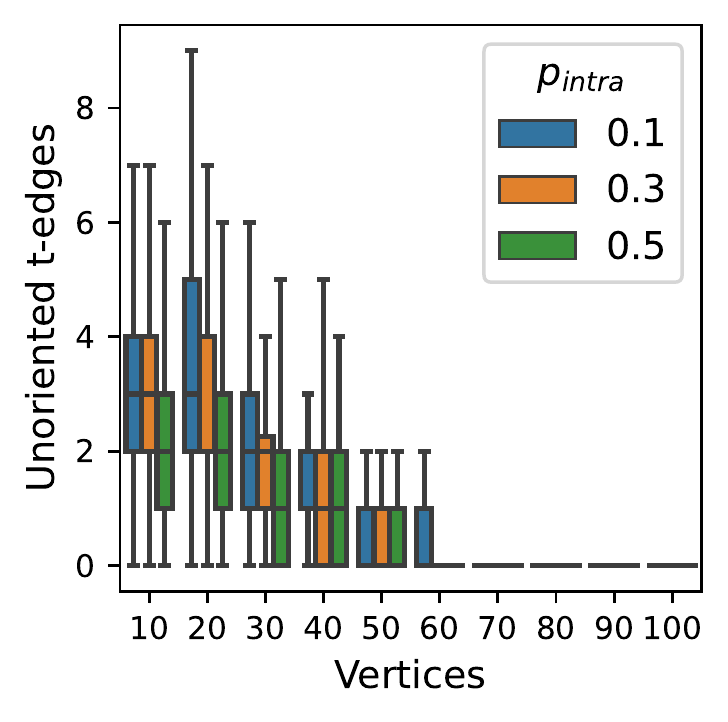}
        }
        \subfigure{%
            \label{fig:theory_identification}\includegraphics[scale=0.63]{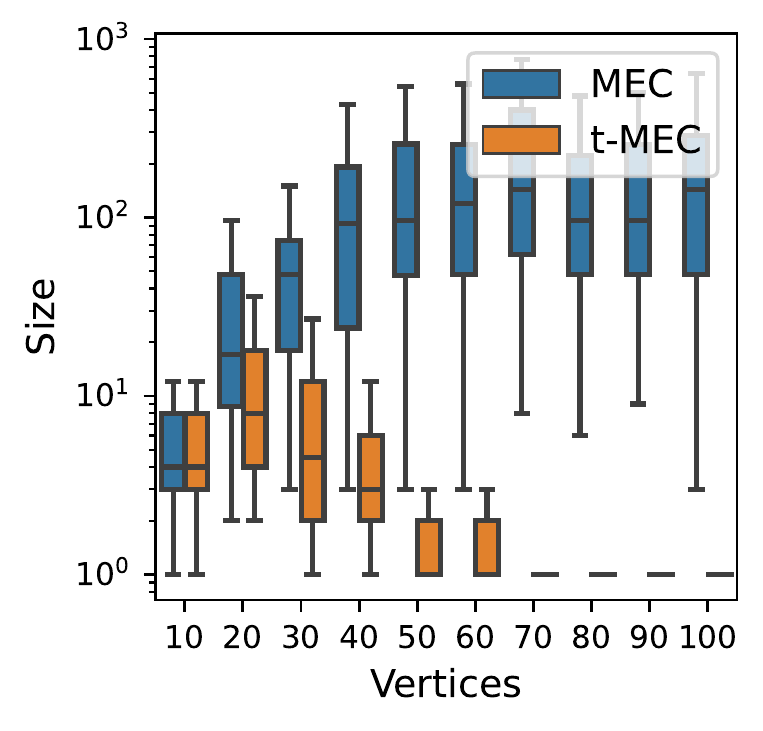}
        }
        \subfigure{%
            \label{fig:theory_mec_ratios}\includegraphics[scale=0.63]{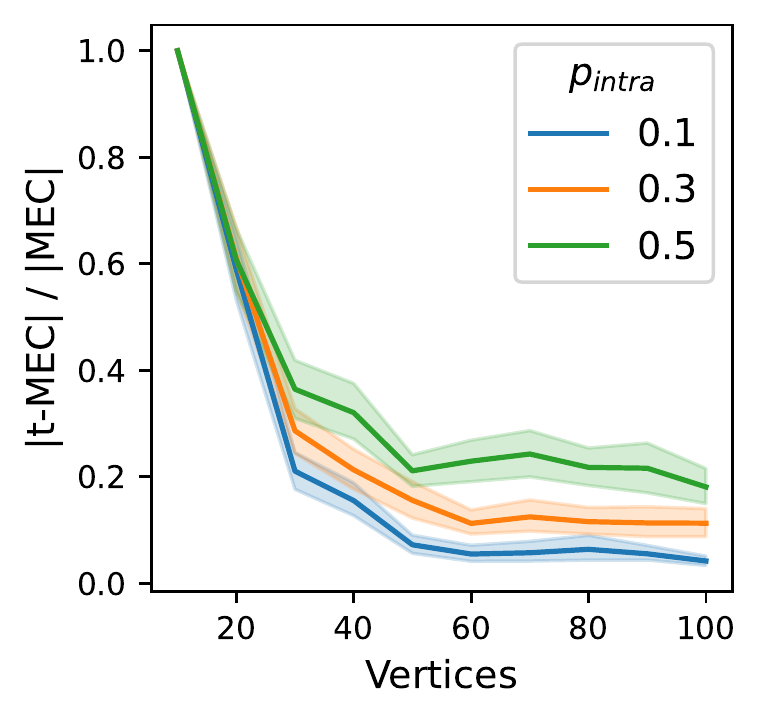}
        }
    }
\end{figure}

\section{Causal discovery algorithms for t-essential graphs} \label{sec:algorithm}

Causal discovery algorithms, such as the PC algorithm~\citep{spirtes2000causation}, are typically consistent w.r.t.\ the MEC.
That is, given infinite samples from the observational distribution entailed by a causal graph, they are guaranteed to recover its essential graph.
Given that the t-MEC is often a small subset of the MEC, it is desirable to find algorithms that can consistently recover t-essential graphs.
In this section, we present such algorithms.

\subsection{From essential to t-essential graph}\label{sec:tmeek}

Given an essential graph, one can recover the t-essential graph by enumerating all Markov equivalent consistent t-DAGs and taking their union. We propose a slightly more efficient approach that propagates t-edge orientations based on the \citet{meek1995causal} orientation rules:
\begin{algorithm}[h!]
\caption{t-Propagation($G, T$)}
\label{alg:tmeek}
    \begin{enumerate}
        \item Enforce type consistency: If there exists an oriented edge between any pair of variables with types ${t_i, t_j \in \mathcal{T}}$ in $G$, assume $t_i \xrightarrow{t} t_j$ and orient all edges between these types.
        \item Apply the \citet{meek1995causal} orientation rules R1-R4 (see their Section 2.1.2) to propagate the edge orientations derived in Step (1).
        \item Repeat from Step (1) until the graph is unchanged.
        \item Enumerate all t-DAGs that can be produced by orienting edges in the resulting graph. Reject any inconsistent t-DAGs and take the union of all remaining t-DAGs to obtain the t-essential graph (see \cref{def:tmec}).
    \end{enumerate}
    \vspace{-7mm}
\end{algorithm}

If $G$ is a graph with the same skeleton, v-structures, and type mapping $T$ as the t-essential graph $D^*_T$, then \cref{alg:tmeek} is guaranteed to recover the corresponding t-essential graph $M_T(D_T)$ (see \cref{app:sec-tmeek}).

Thus, \cref{alg:tmeek} can be used in conjunction with any \mbox{MEC-consistent} causal discovery algorithm to obtain one that is \mbox{t-MEC-consistent}.
However, one major caveat is that, for finite sample sizes, the output of the MEC-consistent algorithm may violate type consistency and cause an irrecuperable failure of t-Propagation (impossibility of orienting t-edges consistently).

Another limitation of this algorithm is its non-polynomial time complexity due to the enumeration in Step (4).
Without it, the algorithm would be \emph{sound}, i.e., it would not orient edges that are unoriented in the t-essential graph, but not \emph{complete}, i.e. some edges that should be oriented in the t-essential graph would remain unoriented.
To see this, consider the \textit{two-type fork} illustrated in \cref{fig:two_type_fork}(a).
The essential graph for this t-DAG would be completely undirected since it contains no v-structures.
This would result in a case where none of the \citet{meek1995causal} rules are applicable and thus, Step (2) would not orient any edges.
The algorithm would therefore stop and return a fully undirected graph.
However, according to \cref{prop:mickeymouse}, the two-type fork should have been oriented.

Nevertheless, even with an additional rule to orient such structures in Step (2) (as illustrated in \cref{fig:two_type_fork}(b)), the algorithm would not be complete without Step (4). In \cref{app:sec-tmeek-counterexamples}, we show more complex counterexamples (non-local and involving multiple t-edges).
It thus 
remains an open question whether it is possible to design a polynomial-time algorithm to find the t-essential graph, as \citet{meek1995causal} and \citet{andersson1997characterization} did for essential graphs. 
Note, however, that the non-polynomial time complexity was found to be non-prohibitive in our experiments.

\subsection{Typed variants of the PC algorithm}\label{sec:tpc}

The previous algorithm suffices to achieve consistency w.r.t. the t-MEC, but it may fail to produce type-consistent outputs when used with finite sample sizes.
Here, we show that it is possible to modify the PC algorithm~\citep{spirtes2000causation} to obtain a t-MEC-consistent algorithm that always produces type-consistent outputs.

Recall how the PC algorithm works; it proceeds in three phases: 1) infer the graph's skeleton using conditional independence tests\footnote{The conditional independence test must be chosen based on the nature of the data. We use  FIT~\citep{chalupka2018fast}, since it is non-parametric and applies to both continuous and discrete data.},
2) orient all v-structures in the skeleton, 3) apply the \citet{meek1995causal} rules to orient as many edges as possible given the edges oriented in Phase (2).
To ensure type-consistent outputs, it suffices to modify Phases (2) and (3) to ensure that, when applicable, we orient whole t-edges instead of single edges.

For Phase (3), the modification is simple: replace the usual procedure by t-Propagation (\cref{alg:tmeek}).
For Phase (2), one must deal with errors that may arise when applying conditional independence tests to samples of finite size.
Indeed, it is possible to observe edges from the same t-edge involved in both v-structures and two-type forks (see \cref{fig:two_type_fork}), suggesting multiple orientations when only one is possible. 
We consider two simple strategies for dealing with such ambiguities that we outline below. These lead to two \emph{Typed-PC (TPC)} algorithms that are both t-MEC-consistent, but differ in their empirical performance, as we show in \cref{sec:experiments}.
Pseudo-codes and proofs of consistency are given in \cref{app:tpc}.

\begin{itemize}
    \item \textbf{TPC-naive}: Use the first encountered structure (v-structure or two-type fork) to orient each t-edge. Naturally, this naive strategy is error-prone.
    \item \textbf{TPC-majority:} We know that only one orientation is possible for t-edges. Hence, look at each structure that would trigger an orientation (v-structures and two-type forks) and choose the orientation based on the most frequent type of structure.
\end{itemize}

\section{Experiments}\label{sec:experiments}

We conduct causal structure learning experiments\footnote{Implementations of the algorithms and code for the experiments are available at \mbox{\url{https://github.com/ElementAI/typed-dag}}.} to compare the performance of our proposed t-MEC-consistent algorithms with that of a baseline which does not make use of variable types.
The t-MEC-consistent algorithms are: TPC-naive and TPC-majority (\cref{sec:tpc}) and PC~\citep{spirtes2000causation} augmented with t-Propagation (\cref{sec:tmeek}).
The baseline is the classical PC algorithm.
The methods are compared in terms of Structural Hamming Distance (SHD) between their output and the ground-truth t-essential graph (see \cref{app:sec-shd-tess} for details).
We base the comparison on synthetic and pseudo-real datasets.

\paragraph{Synthetic data.} We consider graphs randomly generated according to \cref{sec:empirical_validation} with $20$ vertices and $5$ types.\footnote{See \cref{app:additional_exp} for additional results.} The probabilities of connection $p_{\text{inter}}$ and $p_{\text{intra}}$ vary in $\{$(0.2, 0), (0.4, 0), (0.3, 0.1), (0.2, 0.2)$\}$. The $(0.2, 0)$ configuration results in sparse graphs with no intra-type edges. The others configurations lead to denser graphs with similar densities, but which differ in the abundance of intra-type edges. For each type of graph, we explore multiple parametric forms for the causal relationships: linear, nonlinear additive noise model (ANM) \citep{buhlmann2014cam}, and nonlinear non-additive model using neural networks (NN) \citep{kalainathan2018sam}.
Finally, for each type of graph and parametric form, we generate $50$ different consistent t-DAGs and draw $10$k samples from their observational distribution.

\paragraph{Pseudo-real data.} We consider the following Bayesian networks from the \href{https://www.bnlearn.com/bnrepository/}{Bayesian Network Repository} (number of variables in parentheses): \texttt{sachs} (11), \texttt{child} (20), \texttt{insurance} (27), \texttt{alarm} (37), \texttt{hailfinder} (56), \texttt{win95pts} (76).
For each network, the conditional probabilities are fitted to real-world data sets, enabling the generation of pseudo-real data.
To assign types to variables, we randomly partition their topological ordering into groups of expected size $5$ (see \cref{app:sec-pseudoreal}).
We consider $50$ random type assignments and for each, we sample $50$k observations from the Bayesian network.

\begin{figure}
    \centering
    \includegraphics[width=\textwidth]{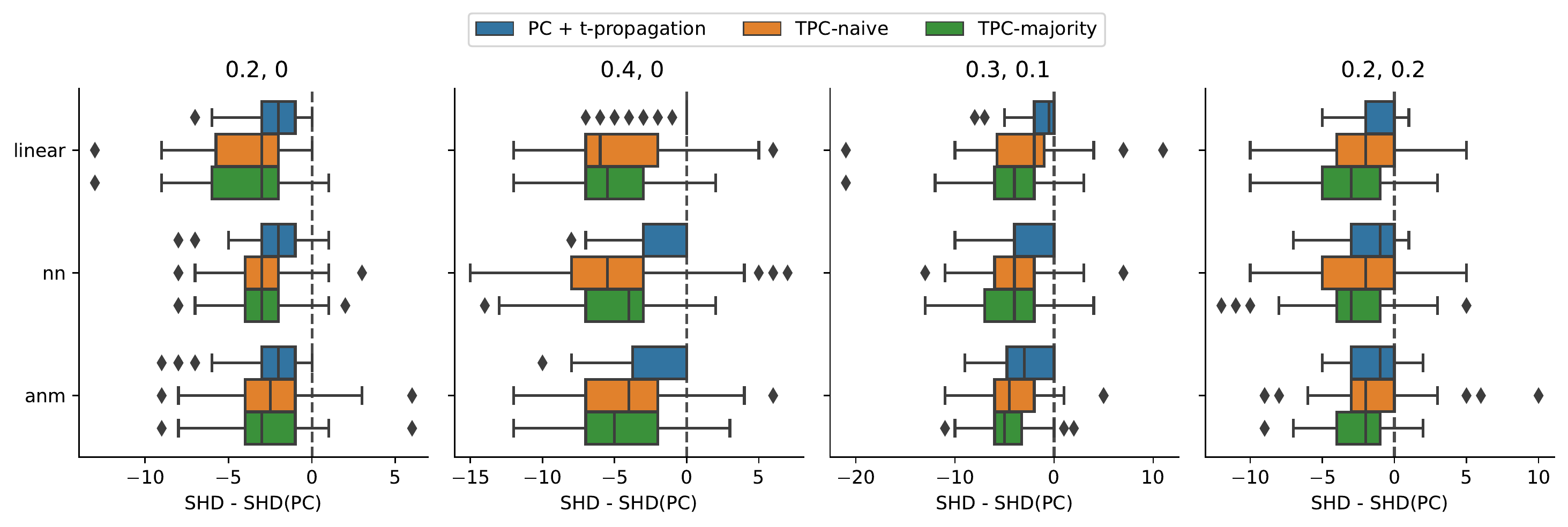}
    \caption{\vspace{-4mm}Results on simulated data shown as improvement in SHD w.r.t. the PC baseline (lower is better). The title of each subplot indicates the ($p_{\text{inter}}, p_{\text{intra}}$) configuration. 
    }
    \label{fig:boxplot_simulated}
\end{figure}

\begin{figure}
    \centering
    \includegraphics[width=\textwidth]{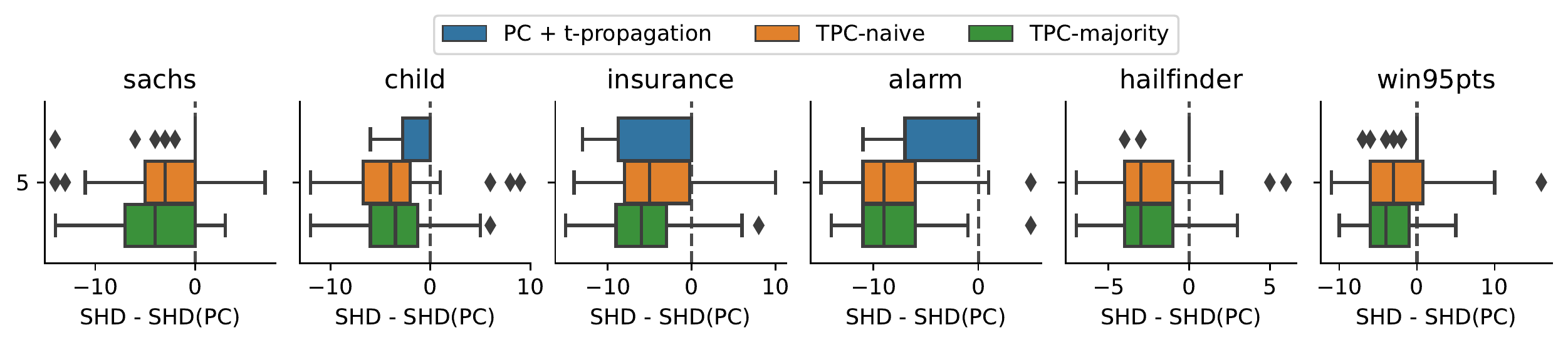}
    \caption{\vspace{-4mm}Results on pseudo-real data shown as improvement in SHD w.r.t. the PC baseline (lower is better). The title of each subplot indicates the underlying network. }
    \label{fig:boxplot_pseudoreal}
\end{figure}

\paragraph{Results.}
The results reported in \cref{fig:boxplot_simulated,,fig:boxplot_pseudoreal} show the improvement in SHD w.r.t. PC.
As expected, all t-MEC-consistent algorithms generally outperform this type-agnostic baseline, achieving lower SHD w.r.t.\ the t-essential graph.
The PC\ +\ t-Propagation method sometimes underperforms compared to the TPC algorithms, mainly because it fails to produce a type-consistent output\footnote{In this case, it simply returns the output of PC.} ($49\%$ of the time).
Finally, as expected, TPC-majority tends to yield better or equal performance compared to TPC-naive, since it is less vulnerable to errors in t-edge orientations that may arise at Phase (2) of the algorithm (see \cref{sec:tpc}).

\section{Discussion} \label{sec:discussion}

In this work, we address an important problem in causal discovery: the fact that it is often impossible to identify the causal graph precisely, due to the size of its Markov equivalence class.
This is particularly true for sparse graphs, where the size of the MEC grows super-exponentially with the number of vertices \citep{he2015counting}.
Our theoretical and empirical results clearly demonstrate that there exist conditions under which our variable-typing assumptions greatly shrink the size of the equivalence class.
Hence, when such assumptions hold in the data, gains in identification are to be expected. We also propose methods to recover the t-essential graph from data and, using several synthetic and pseudo-real data sets, show that these perform better than their type-agnostic counterparts.

We note that the new assumptions that we introduce can be used in conjunction with other strategies to shrink the size of equivalence classes, such as considering interventions~\citep{hauser2012characterization}, hard background knowledge on the presence/absence of edges~\citep{meek1995causal}, or functional-form assumptions~\citep{peters2014causal, shimizu2006linear}.

While this work focuses on causal discovery, it would be interesting to explore the implications of our theoretical framework for causal inference, i.e., the estimation of causal effects.
For instance, the small size of t-MECs in comparison to MECs could improve the accuracy of methods that estimate causal effects based on equivalence classes, such as the IDA variant of \citet{perkovic2017interpreting}.
Also, \cite{anand2022effect} recently proposed a method to estimate causal effects in graphs where clusters of variables have unknown relationships. This may prove particularly useful to estimate causal effects based on t-essential graphs with oriented t-edges, but unoriented intra-type edges, since these could be treated as clusters.

In addition, we believe that this work may stimulate new advances at the intersection of machine learning and causality~\citep{scholkopf2021toward, scholkopf2019causality}.
In fact, machine learning algorithms excel at classification, and thus it may be interesting to explore a setting where the variable types are learned based on some variable features.
Type assignments could be learned \emph{in parallel with} causal discovery using recent methods for differentiable causal discovery~\citep{dcdi, NEURIPS2018_e347c514}.
This may further reduce the burden on human experts in cases where types are hard to assign.
As an example, consider the task of learning causal models of gene regulatory networks.
One could train a model to assign types to genes, based on features of their DNA sequence or their categorization in the gene ontology~\citep{gene2004gene}, in a process where types are assigned such as to help causal discovery.

Another interesting future direction would be to use our typing assumptions to perform causal discovery on multiple graphs at once, i.e., \emph{multi-task causal discovery}.
In fact, assume that we are given data for multiple groups of variables that correspond to disjoint systems (no interactions across groups), but that share similar types.
It would be possible to use type consistency (\cref{def:consistent tdag}) to propagate t-edge orientations across graphs.

In conclusion, our type consistency assumption can result in significant gains in identification that can readily be leveraged by modified versions of common causal discovery algorithms. 
We believe that assumptions based on types are truly important since, in addition to facilitating causal discovery, 
they are likely to be a key component of causal reasoning in intelligent agents.

\section*{Acknowledgements}
The authors are grateful to Assya Trofimov, David Berger, Hector Palacios, Jean-Philippe Reid, Nicolas Chapados, Pau Rodriguez, Pierre-André Noël, and Sébastien Paquet for helpful comments and suggestions. They also thank the anonymous reviewers for their thoughtful questions and comments.
Sébastien Lachapelle is supported by an IVADO Excellence PhD scholarship.
\bibliography{ref}

\clearpage
\appendix

\section{Proof of propositions}\label{app:proof_proposition}

%
%

\begin{repproposition}{prop:essential_subset}
Let $D^*_T$ and $D^*$ be, respectively, the {t-essential} and essential graphs of an arbitrary consistent t-DAG $D_T$.
Then, ${D_T \subseteq D^*_T \subseteq D^*}$.
\end{repproposition}

\begin{proof}
It is clear that $D_T \subseteq D^*_T$ and $D_T \subseteq D^*$ since $D^*_T$ and $D^*$ are obtained by undirecting edges from $D_T$. 
Also, since enforcing type consistency can only orient more edges in $D^*$, we have that $D^*_T \subseteq D^*$.
\end{proof}

\begin{repproposition}{prop:mickeymouse}
If a consistent t-DAG $D_T$ contains vertices $v_{a_1}, v_{a_2}, v_{b_1}$ 
with types  $T(v_{a_1}) = T(v_{a_2}) = t_a$, ${T(v_{b_1}) = t_b}$ and $t_a \not= t_b$, 
with edges $v_{a_1} \leftarrow v_{b_1} \rightarrow v_{a_2}$ ($v_{a_1}, v_{a_2}$ not adjacent), then the t-edge $t_b \trightarrow t_a$ is directed in the t-essential graph, i.e., the direction of causation between types~$t_b$ and $t_a$ is known.
\end{repproposition}

\begin{proof}
To prove the statement we show that among all possible orientations $t_b \trightarrow t_a$, $t_b \tleftarrow t_a$, and $t_b\  \overset{t}{\textbf{\textendash}}\ t_a$ of the t-edge, 
the last two are not valid.

For the sake of contradiction, assume $t_b \tleftarrow t_a$ is directed in the t-essential graph of $D_T$. 
This means that there exists a consistent t-DAG $D_1$, having t-edge $t_b \tleftarrow t_a$, that is Markov equivalent to $D_T$. 
Recall that two graphs are Markov equivalent if and only if they have the same skeleton and the same v-structures \citep{verma1990equivalence}. 
Given that $t_b \tleftarrow t_a$, then $D_1$ has the structure $v_{a_1} \rightarrow v_{b_1} \leftarrow v_{a_2}$, which forms a v-structure. 
But since $D_T$ does not contain this v-structure, this contradicts the fact that $D_1$ is Markov equivalent to $D_T$. 

Now, suppose that $t_b\ \overset{t}{\textbf{\textendash}}\ t_a$ is not directed in the t-essential graph of $D_T$. 
This means that there exist two consistent \mbox{t-DAGs} $D_1$ and $D_2$ that are Markov equivalent to $D_T$, 
having the t-edge orientations $t_b \tleftarrow t_a$ and $t_b \trightarrow t_a$, respectively.
As per the argument in the previous case, the existence of $D_1$ leads to a contradiction.

Therefore, the only possible orientation for t-edge between the types $t_a$ and $t_b$ is $t_b \trightarrow t_a$.
\end{proof}

\begin{repproposition}{prop:random-tmec-ubound}
For any consistent t-DAG $D_T$, we have $|M_T(D_T)| \leq 2^u \prod_{t_i \in \mathcal{T}} 2^{u_{t_i}}$, where~$u$ and $u_{t_i}$ are respectively the number of undirected t-edges and the number of undirected edges between variables of type $t_i$ (intra-type edges) in the t-essential graph of~$D_T$.
\end{repproposition}

\begin{proof}
A t-essential graph is the union of consistent t-DAGs. First, note that for edges between variables of different types, we do not have to consider every edge of a consistent t-DAG independently since, by consistency, we have that all the edges included in a t-edge of $D_T$ will always take the same orientation. 
Thus, if a t-edge is undirected in $D^*_T$, it means that there exists at least one consistent t-DAG in $M_T(D_T)$ for each orientation of the t-edge. 
Since each of the $u$ undirected t-edges can take on two directions, there are $2^u$ possible combinations. 
For edges between variables of the same type $t_i$, we have the same upper bound $2^{u_{t_i}}$ where ${u_{t_i}}$ is the number of undirected edges between variables of type $t_i$.
The total of possible combinations is the product of these bounds: $2^u \prod_{t_i \in \mathcal{T}} 2^{u_{t_i}}$.
Note that this is only an upper bound --- some of these orientations are not part of the equivalence class, since they create either a cycle or new v-structures not present in $D_T$.
\end{proof}

\FloatBarrier
\section{Identification results for random graphs}

\subsection{Proof of Convergence}\label{app:proof_t-edge_convergence}
\begin{reptheorem}{thm:tedges_zero}
Let $(D_{T^n}^n)_{n=0}^\infty$ be a random sequence of growing t-DAGs as defined in \cref{def:random_tdags}, let $U$ be the number of unoriented  t-edges, and let $r_{ij} = -\tfrac{1}{3} \max\big[\ln (1-p_i),  \ln \big(1-p_j p_{ij}(1-p_{jj})\big)\big]$. For $n\geq3$, $p_i, p_j \in (0,1)$, and $p_{ij}, p_{jj} \in [0,1]$, we have: 
\begin{align*}
    P(U > 0) \leq 4  \sum_{i,j \; : \: i \neq j,  p_{ij} > 0} e^{-r_{ij} n}.
\end{align*}
\end{reptheorem}

\begin{proof}
To prove the theorem, we leverage \cref{prop:mickeymouse},  which states that a t-edge from type $t_i$ to type~$t_j$ is oriented when we observe a two-type fork structure. Hence, we upper-bound with an exponential function the probability of not observing such an event as $n$ grows (see \cref{fig:convergence-proof}). Without loss of generality, we assume that a vertex of type $t_i$  causes a vertex of type $t_j$. 

\begin{figure}[htb]
    \centering
    \includegraphics[width=0.3\textwidth]{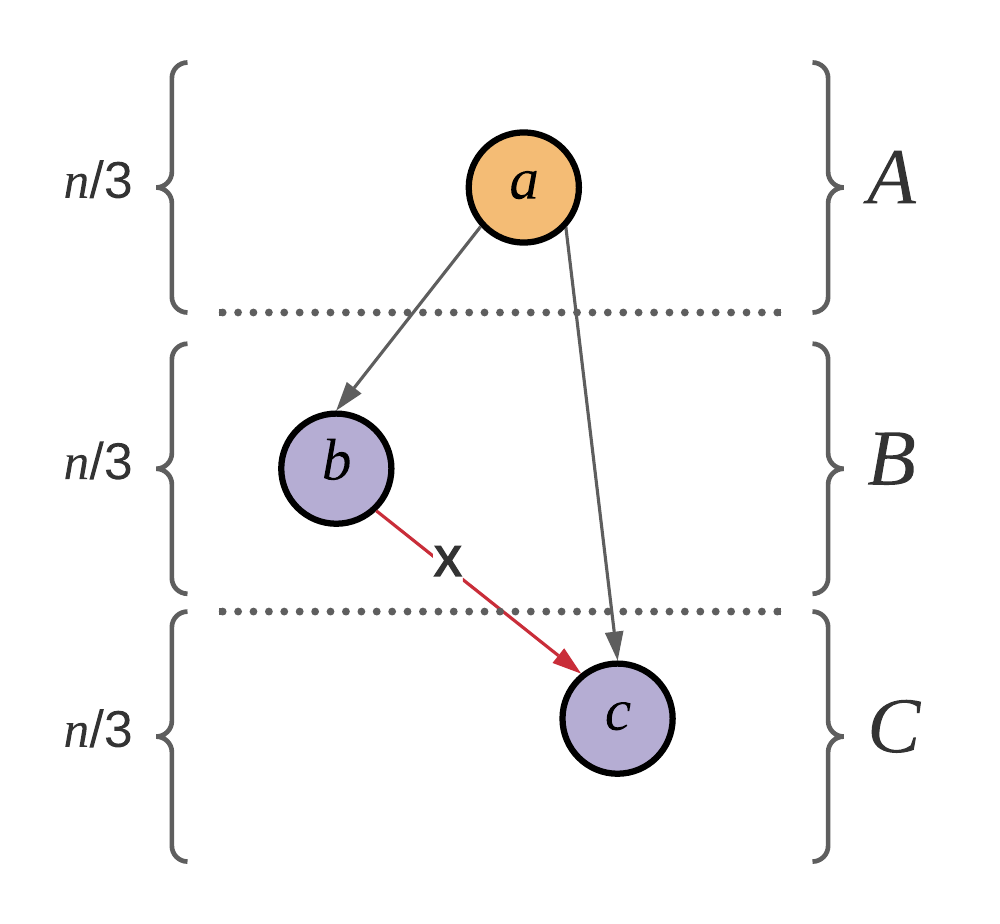}
    \caption{\textbf{Sketch of the proof}. We upper-bound the probability of not observing this two-type fork. The colors correspond to types.}
    \label{fig:convergence-proof}
\end{figure}

\paragraph{Event $\pmb{A}$} Let $A$ be the event of observing at least 1 node of type $t_i$ in the first $m = n/3$ nodes. Using $\alpha = 1-p_i$ as the probability of not sampling a node of type $i$ when we sample a new node, we have:
\begin{align}
    P(A=1 \mid m) = 1 - \alpha^m.
\end{align}
If event $A$ occurs, we define $v_a$ as the first vertex of type $t_i$.

\paragraph{Event $\pmb{B}$} Assuming that event $A$ occurred, we define $B$ as the event of having at least 1 vertex of type $t_j$ caused by $v_a$ during the sampling of the $m=n/3$ nodes that follow. For every new vertex, $p_j \cdot p_{ij}$ represents the probability of this new node being of type $t_j$ and connecting to vertex $v_a$. Using $\beta = 1 -p_j \cdot p_{ij}$ as the inverse of such probability, we have:
\begin{align}
    P(B=1 \mid A=1, m) = 1 - \beta^m.
\end{align}
If $B$ occurs, we define $v_b$ as the first vertex of type $t_j$ connecting to $v_a$. 

\paragraph{Event $\pmb{C}$} Finally, assuming event $A$ and $B$ occurred, we define event $C$ as the event of having at least 1 vertex of type $t_j$ caused by $v_a$ and not connecting to $v_b$, during the sampling of the last $m=n/3$ vertices. For every new vertex, $p_j \cdot p_{ij} \cdot (1-p_{jj})$ represents the probability of this new vertex satisfying these conditions. Using $\gamma = 1- p_j \cdot p_{ij} \cdot (1-p_{jj})$ as the inverse of this probability, we have:
\begin{align}
    P(C=1 \mid A=1, B=1, m) = 1 - \gamma^m.
\end{align}

Let $F_{ij}$ be the event of orienting the t-edge $E(t_i, t_j)$. We thus have:
\begin{align*}
p(F_{ij} \mid n) & \geq P(A=1,B=1,C=1 \mid n) \\
&= P(A=1 \mid m)P(B=1 \mid A=1, m)P(C=1 \mid A=1, B=1, m)\\
&= (1 - \alpha^m) (1 - \beta^m) (1 - \gamma^m) \\
&= 1 - \alpha^m - \beta^m - \gamma^m - \alpha^m \beta^m \gamma^m \color{gray}{ + \alpha^m \beta^m + \alpha^m \gamma^m + \beta^m \gamma^m }\\
&\geq 1 - \alpha^m - \beta^m - \gamma^m - \alpha^m \beta^m \gamma^m 
\end{align*}
The first inequality arises from the fact that many events could lead to t-edges orientation, but we focus only on a subset of them as a sufficient condition. In the last inequality, we drop positive terms to simplify the proof.

To show the convergence rate, we upper-bound the inverse of the probability of event $F_{ij}$ with an exponential function. Since $n\geq 3$, $m\geq1$, $\beta \leq \gamma$, and $\alpha, \beta, \gamma \in (0,1)$, we have:
\begin{align}
    1 - p(F_{ij} \mid n) &\leq   \alpha^m + \beta^m + \gamma^m + \alpha^m \beta^m \gamma^m \nonumber \\
    &= e^{m \ln \alpha} + e^{m \ln \beta} + e^{m \ln \gamma} + e^{m \ln (\alpha \beta \gamma)} \nonumber \\
    &\leq 4 e^{m \ln [ \max(  \alpha,  \beta,  \gamma,  \alpha \beta \gamma) ]} \nonumber \\
    &= 4 e^{m \ln [ \max(  \alpha,   \gamma) ]} \\
    &= 4 e^{-n \cdot r_{ij}} \label{eqn:upper-bound}
\end{align}
Using the union bound for all t-edges, we conclude the proof. We note that the bound is vacuous for small $n$, but converges to zero exponentially fast.
\end{proof}

\FloatBarrier
\subsection{Additional empirical results}\label{app:random-tdag-experiments}

\begin{figure}[htbp]
    \floatconts
    {fig:example2}%
    {
        \caption{Size of the equivalence classes (MEC and t-MEC) w.r.t.various graph properties. a) Number of vertices: the t-MEC shrinks with the number of vertices, while the MEC does not. b) Number of types: The t-MEC grows with the number of types since fewer constraints are placed on the structure of the graphs. The size of the MEC is \emph{mostly} constant. c) Edge density (i.e., probability of connectivity $p$):  The MEC and t-MEC both shrink as connectivity increases.}
    }%
    {%
        \subfigure{%
            \label{fig:exp_bynodes}\includegraphics[width=0.3\linewidth]{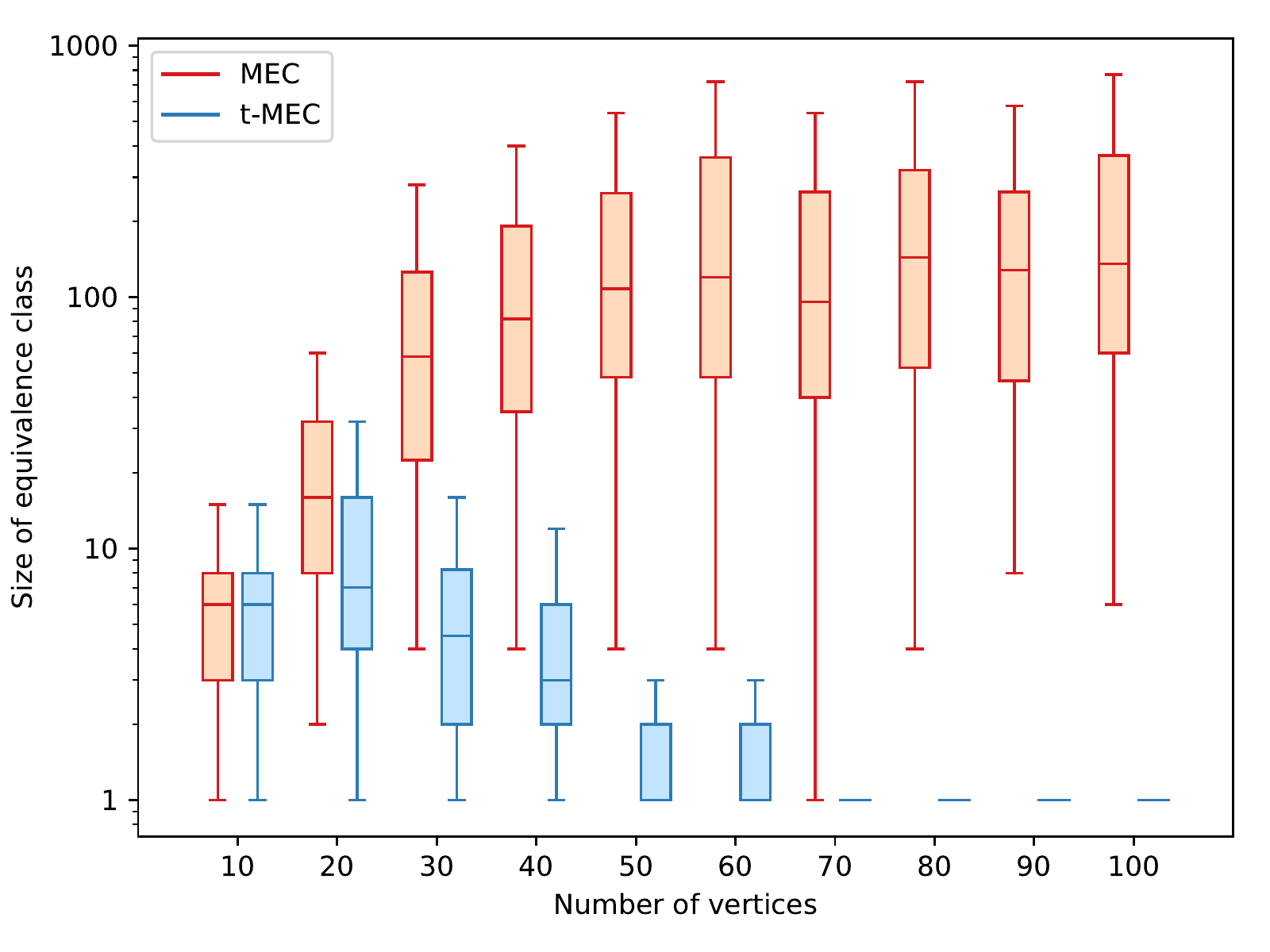}
        }
        \subfigure{%
            \label{fig:exp_bytypes}\includegraphics[width=0.3\linewidth]{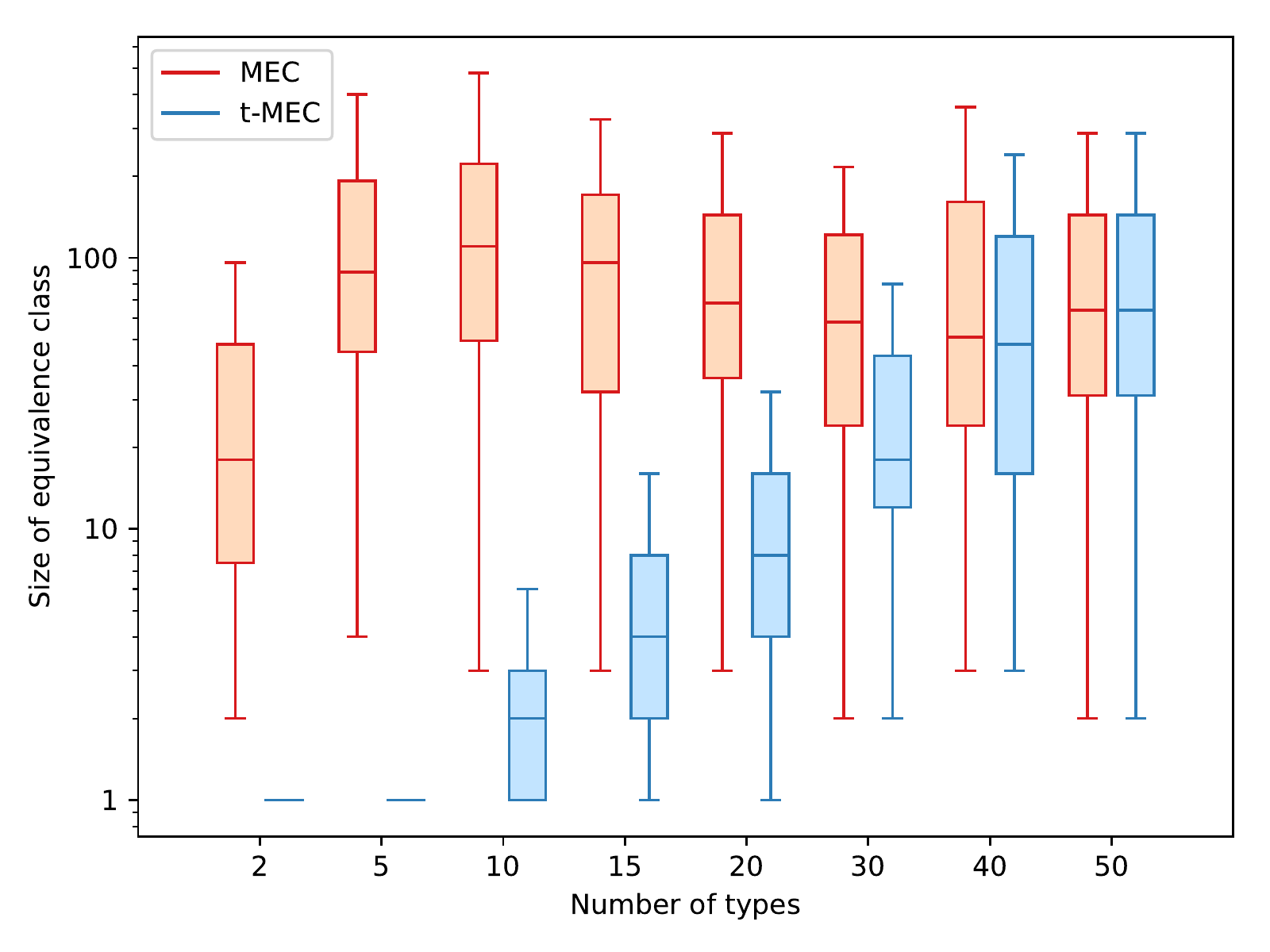}
        }
        \subfigure{%
            \label{fig:exp_bydensity}\includegraphics[width=0.3\columnwidth]{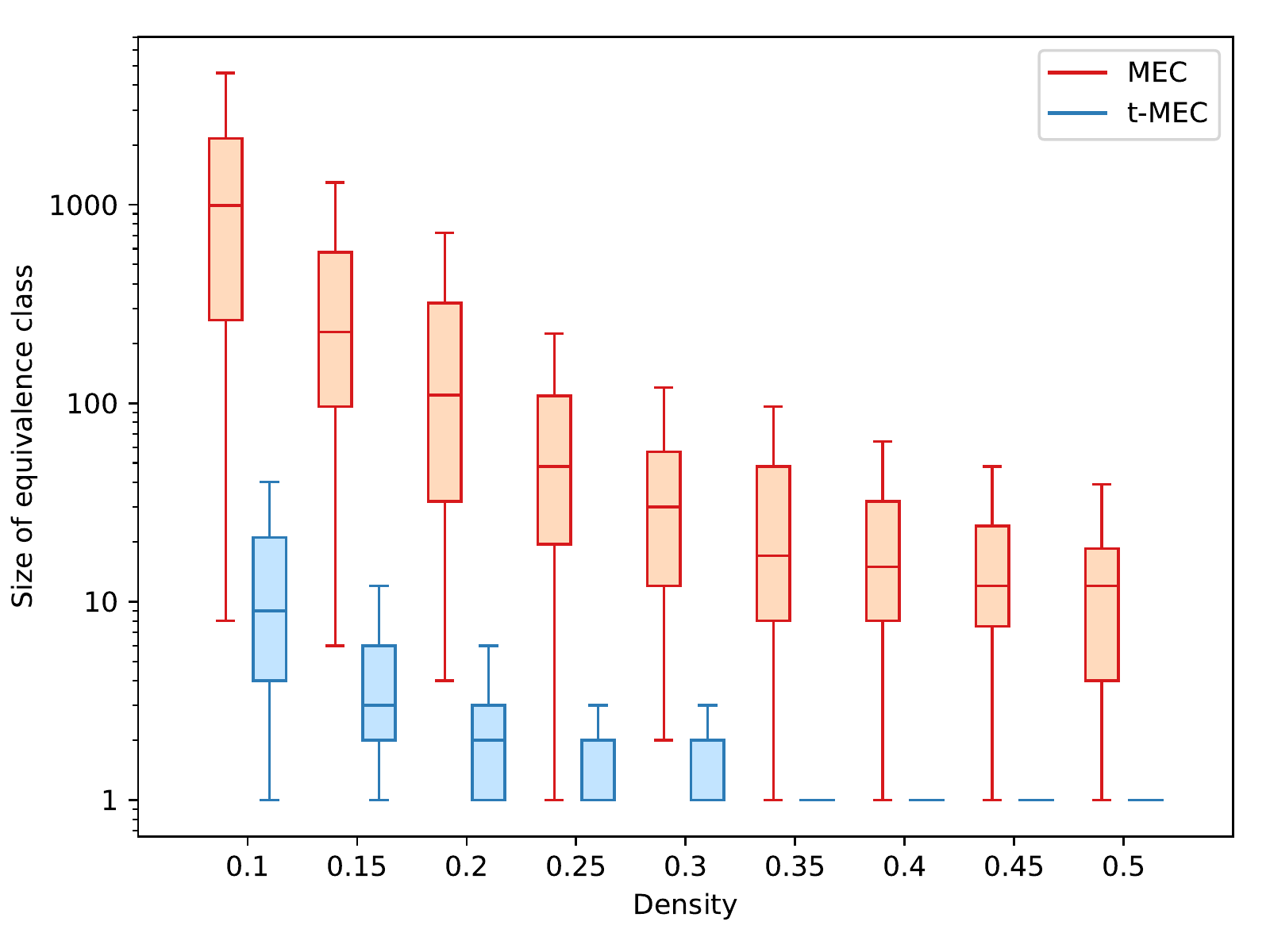}
        }
    }
\end{figure}

For the case where $p_{\text{intra}} = 0$, we performed additional experiments to understand how the size of MECs and t-MECs compare w.r.t. different parameters.
The t-DAGs that we consider are randomly generated according to the process described at \cref{def:random_tdags}. Unless otherwise specified , the number of vertices is $50$, the number of types is $10$, and the probability $p_{\text{inter}}$ is $0.2$.

In \cref{fig:exp_bynodes,,fig:exp_bytypes,,fig:exp_bydensity}, the size of the equivalence classes are compared with respect to the number of vertices, the number of types, and the density, respectively. All boxplots are calculated over 100 random consistent t-DAGs. 

First, in \cref{fig:exp_bynodes} we see that as the number of vertices increases (and the number of types remains constant), the size of the t-MEC converges to $1$, as demonstrated in \cref{sec:identification}. 
In contrast, the size of the MEC first increases and then remains near-constant. 
Notice how the size of the MEC and the t-MEC are identical when the number of vertices equals the number of types; this is because type consistency does not constrain the graph structure.
Second, in  \cref{fig:exp_bytypes}, as the number of types increases, the size of the t-MEC increases. 
This is expected because as the number of types approaches the number of vertices, type consistency imposes fewer structural constraints.
Further, notice that the size of the MEC changes with the number of types, even though it is agnostic to type consistency.
This is because t-DAGs with fewer types (e.g., 2) are more likely to contain v-structures, leading to smaller MECs.
Third, in \cref{fig:exp_bydensity}, as the density increases, the size of the MEC and the t-MEC both decrease. 
This is in line with the observations of \citet{he2015counting}.

In summary, when $p_{\text{intra}}= 0$, all our experiments indicate that the size of the t-MEC is smaller or equal to that of the MEC for random t-DAGs. 
The difference is particularly striking when the number of types is small and the number of vertices is large.
Of particular interest are the results shown in \cref{fig:exp_bynodes}, as they provide empirical evidence for the correctness of \cref{cor:identification_pintra_zero}.

\FloatBarrier
\section{Causal discovery algorithms}\label{app:methods}

\subsection{t-Propagation}\label{app:sec-tmeek}

\subsubsection{Consistency}
\begin{theorem}\label{thm:tmeek}
Given any partially oriented DAG $G$ and t-DAG $D_T$ that has the type mapping $T$ (see \cref{def:tdag}) with the same skeleton and v-structures, and such that $D_T \subseteq G$, t-Propagation($G, T$) is guaranteed to return the t-essential graph of $D_T$.
\end{theorem}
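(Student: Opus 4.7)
The plan is to establish the theorem in two stages: soundness of the propagation phase (Steps 1--3) and correctness of the brute-force enumeration (Step 4). The overarching strategy is to show that once Steps 1--3 terminate, the resulting partially oriented graph $H_*$ satisfies $D^*_T \subseteq H_* \subseteq G$, and that Step 4 then enumerates exactly the elements of $M_T(D_T)$, whose union is $D^*_T$ by \cref{def:tmec}.

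For soundness I would argue by induction on the iteration count. Let $H_i$ be the partially oriented graph after $i$ iterations of Steps 1--2, with $H_0 = G$; the inductive invariant is $D^*_T \subseteq H_i$. The base case follows from the hypotheses: since $G$ and $D_T$ share skeleton and v-structures and $D_T \subseteq G$, every oriented edge of $G$ is either a v-structure (invariant under Markov equivalence and hence oriented identically in every $D' \in M_T(D_T)$, therefore in $D^*_T$) or an orientation inherited from $D_T$ that is preserved across $M_T(D_T)$. For the inductive step on Step 1, \cref{def:consistent tdag} ensures that once a single edge of a t-edge $E(t_i,t_j)$ is oriented, every $D' \in M_T(D_T)$ must orient $t_i \xrightarrow{t} t_j$, so propagating this orientation to the entire t-edge agrees with $D^*_T$. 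The inductive step on Step 2 invokes the classical soundness of the \citet{meek1995causal} rules R1--R4 on essential graphs.

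For the correctness of Step 4 I would show that the enumerated, type-consistent DAG extensions of $H_*$ coincide with $M_T(D_T)$. Every $D' \in M_T(D_T)$ is a type-consistent acyclic orientation of $H_*$ (because $D' \subseteq D^*_T \subseteq H_*$) that preserves the v-structures of $D_T$, so $D'$ is enumerated and not rejected. Conversely, by the completeness of Meek's rules for essential graphs, any DAG extension of $H_*$ introducing no new v-structures and no cycles lies in $M(D_T)$; intersecting with the type-consistency constraint then restricts the collection to precisely $M_T(D_T)$. Taking the union of these t-DAGs yields $D^*_T$ by \cref{def:tmec}.

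The hard part will be the interleaved fixed-point analysis of Steps 1 and 2: type-consistency propagation can unlock additional Meek applications, and newly Meek-oriented edges can in turn trigger additional type-consistency propagations, so one must verify that the joint iteration saturates at a graph $H_*$ whose consistent DAG extensions coincide with $M_T(D_T)$. A related subtlety is that Meek's rules alone do not capture structures such as the two-type fork of \cref{prop:mickeymouse}; this is precisely why the brute-force Step 4 is indispensable, and the delicate part of the completeness direction is to argue that the enumeration in Step 4 recovers exactly these missing orientations without introducing spurious ones.
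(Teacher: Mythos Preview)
Your overall decomposition (soundness of Steps 1--3, then correctness of Step 4) matches the paper's. The paper differs chiefly in that it establishes the weaker soundness invariant $D_T \subseteq H_i$ rather than your $D^*_T \subseteq H_i$, proving it by a direct case analysis on Meek's rules R1--R4 (each wrong orientation would force a cycle or a new v-structure in $D_T$); it then dispatches Step 4 in one line (``by definition''), whereas you argue that step more carefully.

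There is, however, a genuine gap in your base case. You assert that every oriented edge of $G$ is ``either a v-structure \ldots\ or an orientation inherited from $D_T$ that is preserved across $M_T(D_T)$,'' but the hypothesis $D_T \subseteq G$ does not guarantee the second alternative: an edge may be directed in $G$, agree with $D_T$, and still fail to be t-essential. For a concrete witness, take $D_T = G = (a \to b)$ on two vertices of distinct types; both orientations are type-consistent and Markov equivalent, so $D^*_T$ has the edge undirected and $D^*_T \not\subseteq G$. Running \cref{alg:tmeek} on this input returns $a \to b \neq D^*_T$, so the theorem as literally stated needs the stronger premise that every directed edge of $G$ is already t-essential (equivalently $D^*_T \subseteq G$), which does hold in the intended applications where $G$ is an essential graph or the output of Phase~2 of TPC. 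The paper's weaker invariant sidesteps your base-case difficulty but then leaves its own Step~4 claim under-argued; your stronger invariant makes Step~4 clean but exposes this gap in the stated hypotheses.
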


\begin{proof}
First, note that according to \cite{verma1990equivalence}, since $G$ and $D_T$ have the same skeleton and v-structures, they are Markov equivalent. Thus, we are sure that Step 4 is by definition (see \cref{def:tmec}) sound and complete when applied to $G$. In order to prove that the overall algorithm is sound and complete, we just need to show that the previous steps are sound. In other words, edges oriented by the three first steps should have the same orientation in $D_T$. 

If the enforcement of type consistency (Step 1) were not sound, then it would imply that some edge is not type consistent in $D_T$, which is a contradiction of the t-DAG consistency. 

For the Meek rule (Step 2), we show that every orienting rule (R1, R2, R3 and R4) is consistent with $D_T$. By contradiction, we suppose that the orientation $v_a \rightarrow v_b$ given by the rule is wrong and thus $v_a \leftarrow v_b \in D_T$. We show that it leads to v-structures not present in $G$ or cycles.
\begin{itemize}
    \item \textbf{R1.} The pattern $v_c \rightarrow v_a - v_b$, leads to $v_a \rightarrow v_b$. If $v_a \leftarrow v_b \in D_T$ then it would form the new v-structure $v_c \rightarrow v_a \leftarrow v_b$.
    \item \textbf{R2.} The pattern $v_a \rightarrow v_c \rightarrow v_b$ and $v_a - v_b$, leads to $v_a \rightarrow v_b$. If $v_a \leftarrow v_b \in D_T$ then it would imply a cycle.
    \item \textbf{R3.} The pattern $v_a - v_c \rightarrow v_b$, $v_a - v_d \rightarrow v_b$ and $v_a - v_b$, leads to $v_a \rightarrow v_b$. If $v_a \leftarrow v_b \in D_T$ then it would form the new v-structure $v_c \rightarrow v_a \leftarrow v_d$ resulting by applying R2 twice in order to avoid creating a cycle.
    \item \textbf{R4.} The pattern $v_a - v_d \rightarrow v_c \rightarrow v_b$, $v_a - v_b$ and any type of edge between $v_a$ and $v_c$, leads to $v_a \rightarrow v_b$. If $v_a \leftarrow v_b \in D_T$ then it would form the new v-structure $v_b \rightarrow v_a \leftarrow v_d$ resulting by applying R2 twice in order to avoid creating a cycle.
\end{itemize}

The step 3 is also sound since it is simply the repetition of step 1 and step 2. Since the three first steps are sound and the last step is sound and complete, the algorithm is guaranteed to output the t-essential graph.
\end{proof}

\subsubsection{Additional counterexamples for completeness without enumeration}\label[appendix]{app:sec-tmeek-counterexamples}
In this section, we give two additional examples where the t-propagation algorithm presented in \cref{sec:tmeek} would not orient some edges that are oriented in the t-essential graph if the step 4 (enumeration and union) was removed. 

Our first example is interesting because it shows that 
in order to decide the orientation of a t-edge
sometimes several t-edges (possibly not local) have to be considered simultaneously. 
The second counterexample shows that looking only for the direct parent or child of a t-edge is not always sufficient. 

\begin{figure*}[t!]
    \centering
    \includegraphics[width=0.85\linewidth]{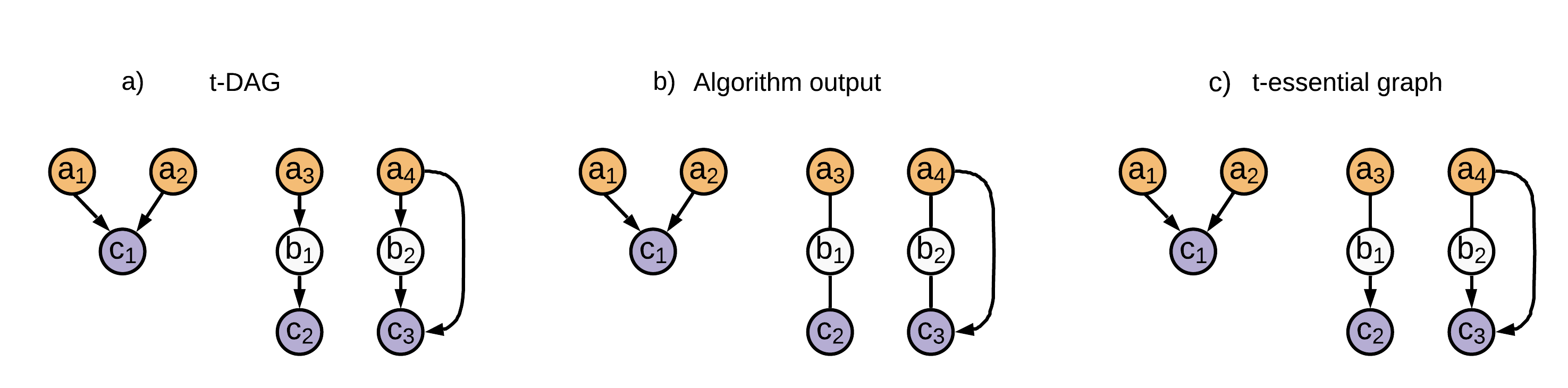}
    \caption{An example where the algorithm would not orient some edges oriented in the t-essential graph. In this case, the orientation of the t-edge is forced by the fact that the reverse orientation would either create a cycle or a new v-structure. \textbf{(a)} The original t-DAG, \textbf{(b)} the algorithm output (which is supposed to be equal to the t-essential graph), \textbf{(c)} the ground-truth t-essential graph}
    \label{fig:counterexample1}
\end{figure*}

\begin{figure*}[t!]
    \centering
    \includegraphics[width=0.85\linewidth]{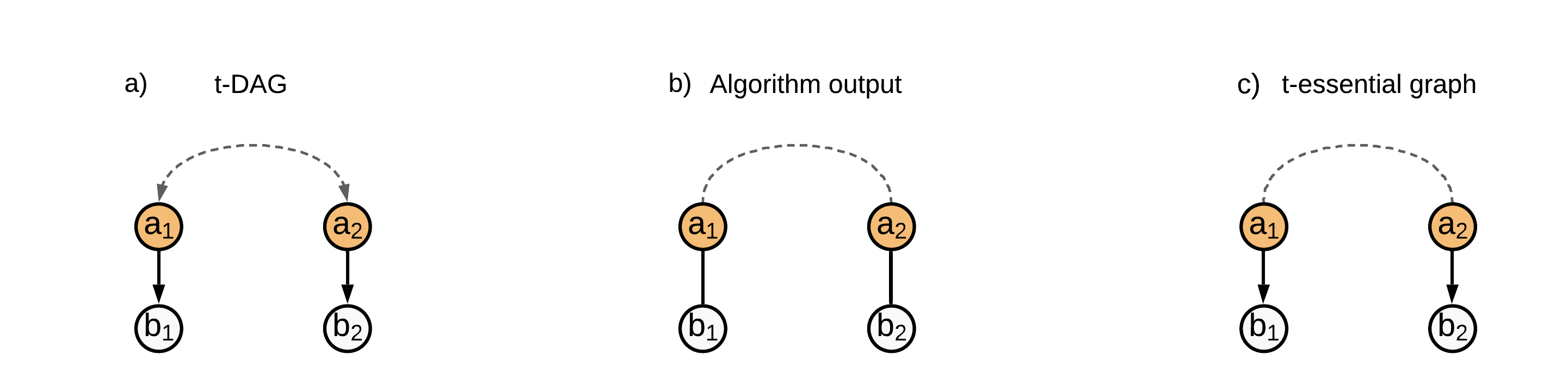}
    \caption{A second example where the algorithm would not orient some edges oriented in the t-essential graph. In this case, the orientation of the t-edge is forced by the fact that the reverse orientation would create a new v-structure. \textbf{(a)} The original t-DAG, \textbf{(b)} the algorithm output (which is supposed to be equal to the t-essential graph), \textbf{(c)} the ground-truth t-essential graph}
    \label{fig:counterexample2}
\end{figure*}

The first example is presented in \cref{fig:counterexample1}. 
Note that vertices denoted by the same letter have the same type. 
The algorithm orients the t-edge $t_a \xrightarrow{t} t_c$ since one of its edges in the t-DAG is part of an v-structure. All other edges are unoriented because they are not covered by any rules. 
However, in the t-essential graph (see \cref{fig:counterexample1}~c) the t-edge ${t_b \xrightarrow{t} t_c}$ is oriented. 
To see why this is the case, consider the four possible orientations of the t-edges $t_a\  \overset{t}{\textbf{\textendash}}\ t_b$  and $t_b\  \overset{t}{\textbf{\textendash}}\ t_c$ (recall that an orientation cannot create a cycle or a new v-structure): 

\begin{enumerate}
    \item $t_a \xrightarrow{t} t_b$, $t_b \xrightarrow{t} t_c$ ~~\indent\indent possible.
    \item $t_a \xrightarrow{t} t_b$, $t_b \xleftarrow{t} t_c$ ~~\indent\indent impossible (creates an v-structure that is not present in the original t-DAG).
    \item $t_a \xleftarrow{t} t_b$, $t_b \xrightarrow{t} t_c$ ~~\indent\indent possible.
    \item $t_a \xleftarrow{t} t_b$, $t_b \xleftarrow{t} t_c$ ~~\indent\indent impossible (creates a cycle).
\end{enumerate}

In the two configurations that are possible, the t-edge $t_b\  \overset{t}{\textbf{\textendash}}\ t_c$ is always oriented as $t_b \xrightarrow{t} t_c$. Thus, this is an essential edge that should have been recovered by the algorithm.

The second example is presented in \cref{fig:counterexample2}. The dashed line between $v_{a_1}$ and $v_{a_2}$ represents a path that does not contain oriented edges in the t-essential graph. Thus, the t-DAG does not contain any v-structure. Without loss of generality, let us consider the dashed line as a chain ${v_{a_1} \leftarrow v_{c_1} \leftarrow v_{c_2} \leftarrow v_{a_2}}$. The algorithm does not orient any t-edges because they are not covered by any rule. However, in the t-essential graph (see \cref{fig:counterexample2}~c) the t-edge ${t_a \xrightarrow{t} t_b}$ is oriented. Consider the impossible orientation ${t_a \xleftarrow{t} t_b}$. Recall that the t-DAG contains no v-structure. Thus, let us orient the edges of the chain as ${v_{a_1} \leftarrow v_{c_1} \leftarrow v_{c_2} \leftarrow v_{a_2}}$ or ${v_{a_1} \rightarrow v_{c_1} \rightarrow v_{c_2} \rightarrow v_{a_2}}$. In both cases, a new v-structure is created (respectively, ${v_{b_1} \rightarrow v_{a_1} \leftarrow v_{c_1}}$ and ${v_{c_2} \rightarrow v_{a_2} \leftarrow v_{b_2}}$) leading to a contradiction. Thus, the t-edge has to be oriented as $t_a \xrightarrow{t} t_b$.

\subsection{Typed-PC} \label{app:tpc}

This section gives additional information on the Typed-PC (TPC) algorithms introduced at \cref{sec:tpc}.
For Phase (1), both algorithms use the same procedure as PC~\citep{spirtes2000causation} to learn the skeleton.
The algorithms differ in their behavior at Phase (2).
The pseudo-codes for the Phase (2) of \emph{TPC-naive} and \emph{TPC-majority} are given at \cref{alg:tpc-naive} and \cref{alg:tpc-majority}, respectively.
For Phase (3), both algorithms use t-Propagation (\cref{alg:tmeek}).
The consistency of these algorithms w.r.t.\ the t-MEC is demonstrated at \cref{thm:tpc_naive} (\emph{TPC-naive}) and \cref{thm:tpc_majority} (\emph{TPC-majority}).

The pseudocodes of the algorithms make use of the following auxiliary functions:
\begin{itemize}
    \item OrientEdge: A method that orients an edge between a pair of variables $v_i, v_j$.
    \item OrientTEdge: A method that orients a t-edge between two types $t_i, t_j$.
    \item Orient: A general orientation method that orients an edge between a pair of variables $v_i, v_j$ and, if $v_i$ and $v_j$ are of distinct types, orients all edges of the corresponding t-edge. This is used when $v_i, v_j$ are not necessarily of the same type.
    \item DisconnectedForks: A method that finds all triplets of vertices $(v_i, v_j, v_k)$ such that $v_i - v_k - v_j$ is in a graph's skeleton, but $v_i - v_j$ is not.
    \item IsOriented: A method that returns $True$ if there is an oriented edge between $v_i$ and $v_j$ and $False$ otherwise.
\end{itemize}

\begin{algorithm2e}
\caption{Orient Forks (Naive)}
\label{alg:tpc-naive}
\SetKwInOut{Input}{input}  
\Input{
    $G=(V, E)$: skeleton of t-DAG with type mapping $T: V \rightarrow \{t_1, ..., t_k\}$\\
    $S_{ij}$: separating sets s.t., $v_i\,\indep_G\,v_j \mid S_{ij}$ for $v_i,v_j \in V$ and $S_{ij} \subset V$
}
\KwOut{$G'$, a more oriented version of $G$}
\For{$(v_i, v_j, v_k) \in \textup{DisconnectedForks}(G)$}{
    \uIf(\tcp*[f]{v-structure}){$k \not\in S_{ij}$}{
        Orient($v_i \rightarrow v_k$)\;
        Orient($v_j \rightarrow v_k$)\;
    }
    \ElseIf(\tcp*[f]{Two-type fork}){$T(v_i) = T(v_j) \not = T(v_k)$}{
        OrientTEdge($T(v_k) \xrightarrow{t} T(v_i)$)\;
    }
}
\end{algorithm2e}

\begin{algorithm2e}
    \caption{Orient Forks (Majority)}
    \label{alg:tpc-majority}
    \SetKwInOut{Input}{input}  
    \Input{
        $G=(V, E)$: skeleton of t-DAG with type mapping $T: V \rightarrow \{t_1, \dots, t_k\}$\\
        $S_{ij}$: separating sets s.t., $v_i\,\indep_G\,v_j \mid S_{ij}$ for $v_i,v_j \in V$ and $S_{ij} \subset V$
    }
    \KwOut{$G'$, a more oriented version of $G$}
    
    \vspace{2mm}\textbf{$\blacktriangleright$ Step 1: orient all multi-type v-structures and two-type forks}\\

    $changed \leftarrow$ True\;
    \While{changed}{
        \tcp{Reinitialize evidence and conditional orientations}
        $E_{ij} \leftarrow 0, \forall i, j \in k \times k$ \tcp*{t-edge orientation evidence counter}
        $C_{ij} \leftarrow \emptyset, \forall i, j \in k \times k$ \tcp*{Conditional orientations}
        
        \vspace{2mm}
        \tcp{Search for t-edge orientation evidence}
        \For{$(v_i, v_j, v_k) \in \textup{DisconnectedForks}(G)$}{
            \uIf(\tcp*[f]{V-structure}){$k \not\in S_{ij} \And \neg(\textup{IsOriented}(v_i, v_k) \And \textup{IsOriented}(v_j, v_k))$}{
                \tcp{Increment t-edge orientation evidence}
                \lIf{$T(v_i) \not= T(v_k)$}{$E[T(v_i), T(v_k)] \mathrel{+}= 1$}
                \lIf{$T(v_j) \not= T(v_k)$}{$E[T(v_j), T(v_k)] \mathrel{+}= 1$}
                
                \tcp{Single-type edges to orient if a t-edge is oriented}
                \lIf{$T(v_i) = T(v_k)$}{
                    $C_{T(v_j), T(v_k)} = C_{T(v_j), T(v_k)} \cup \{ (v_i, v_k) \}$
                }
                \lIf{$T(v_j) = T(v_k)$}{
                    $C_{T(v_i), T(v_k)} = C_{T(v_i), T(v_k)} \cup \{ (v_j, v_k) \}$
                }
            }
            \ElseIf(\tcp*[f]{Two-type fork}){$T(v_i) = T(v_j) \not = T(v_k)$}{
                \tcp{Increment t-edge orientation evidence}
                $E[T(v_k), T(v_i)] \mathrel{+}= 2$\;
            }
        }
        \lIf{$\max E = 0$}{
            $changed \leftarrow False$
        }
        \Else{
            \tcp{Orient t-edge $t_i \xrightarrow{t} t_j$ with max evidence and all edges in $C_{t_i, t_j}$}
            $(t_i, t_j) \leftarrow \text{argmax}\,E$\;
            orientTEdge($t_i \xrightarrow{t} t_j$)\;
            \lFor{$(v_k, v_l) \in C_{t_i, t_j}$}{
                orientEdge($v_k \rightarrow v_l$)
            }
        }
    }
    
    \vspace{2mm}\textbf{$\blacktriangleright$ Step 2: orient all single-type v-structures}\\
    \For{$(v_i, v_j, v_k) \in \textup{DisconnectedForks}(G)$}{
        \If(\tcp*[f]{V-structure}){$T(v_i) = T(v_j) = T(v_k) \And v_k \not\in S_{ij}$}{
            OrientEdge($v_i \rightarrow v_k$)\;
            OrientEdge($v_j \rightarrow v_k$)\;
        }
    }
\end{algorithm2e}
\FloatBarrier

\begin{theorem}[Consistency of TPC-naive to the t-MEC] \label[theorem]{thm:tpc_naive}
Assuming that the causal Markov property holds, under the faithfulness and causal sufficiency assumptions (see \cref{sec:background}), given a suitable conditional independence test, and in the limit of infinite samples from the distribution entailed by a t-DAG $D_T$, the TPC-naive algorithm is guaranteed to recover the t-essential graph $D^*_T$.
\end{theorem}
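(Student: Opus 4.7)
The plan is to decompose the analysis into the three phases of TPC-naive and show that in the infinite-sample limit each phase produces a partial orientation that is consistent with the true t-DAG $D_T$, so that the final call to t-Propagation (\cref{alg:tmeek}) returns the t-essential graph $D_T^*$ by \cref{thm:tmeek}. First I would invoke the standard PC consistency result: under faithfulness, causal sufficiency and infinite samples, the conditional independence tests are exact, so Phase (1) returns the true skeleton of $D_T$ together with separating sets $S_{ij}$ such that $k \notin S_{ij}$ if and only if $v_i \to v_k \leftarrow v_j$ is a v-structure in $D_T$. In particular, $D_T$ and the output of Phase (1) have identical skeletons and the same v-structures (the latter will be extracted in Phase (2)).

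For Phase (2), I would show that every orientation made by \cref{alg:tpc-naive} is correct in the sense that it agrees with at least one (equivalently, every) DAG in the t-MEC of $D_T$. Two cases arise for a disconnected fork $(v_i, v_j, v_k)$ with $v_k \notin S_{ij}$: this is a genuine v-structure of $D_T$ (by the previous paragraph), so orienting $v_i \to v_k$ and $v_j \to v_k$ is correct, and the implicit propagation inside \textsc{Orient} to the whole t-edge whenever $T(v_i)\neq T(v_k)$ is correct by type consistency (\cref{def:consistent tdag}). For a disconnected fork with $v_k \in S_{ij}$ and $T(v_i)=T(v_j)\neq T(v_k)$, this is precisely the two-type fork of \cref{prop:mickeymouse}, so orienting $T(v_k) \trightarrow T(v_i)$ is forced in $D_T^*$. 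Because every orientation performed by Phase (2) agrees with $D_T$, no two such orientations can conflict: if the algorithm visits multiple structures that concern the same t-edge, they all orient it in the same direction, so the ``first encountered'' tie-breaking of the naive strategy is harmless in the population limit. Consequently, after Phase (2) we have a partially oriented graph $G$ with the skeleton of $D_T$, all v-structures of $D_T$ oriented, and $D_T \subseteq G$ (read as: every oriented edge of $G$ points as in $D_T$).

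For Phase (3), the graph $G$ meets exactly the hypotheses of \cref{thm:tmeek}: it has the same skeleton and v-structures as $D_T$, shares the type mapping $T$, and satisfies $D_T \subseteq G$. Hence t-Propagation($G,T$) returns $D_T^*$, finishing the proof. The statement of the theorem then follows by composing Phases (1)–(3).

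The main obstacle is the conflict-freeness argument for Phase (2), since the naive strategy commits on the first fork it sees and could in principle be invalidated by a later fork demanding the opposite orientation of the same t-edge. The key observation that dissolves this is that every candidate orientation, whether arising from a v-structure or from a two-type fork, is \emph{a posteriori} an orientation present in $D_T$ itself (v-structures by standard PC analysis, two-type forks by \cref{prop:mickeymouse}); since $D_T$ is by assumption a consistent t-DAG, all these orientations are mutually compatible in the population limit, and no inconsistency ever arises. The rest of the argument is bookkeeping: checking that the implicit t-edge propagation inside \textsc{Orient} never conflicts with an already-performed orientation, and that the hypotheses of \cref{thm:tmeek} are indeed met at the entry of Phase (3).
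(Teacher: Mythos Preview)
Your proposal is correct and follows essentially the same three-phase decomposition as the paper's proof: exact skeleton recovery in Phase~(1) by faithfulness, correctness of every v-structure and two-type-fork orientation in Phase~(2) (with t-edge propagation justified by type consistency of $D_T$), and application of \cref{thm:tmeek} in Phase~(3). Your explicit conflict-freeness argument for the naive tie-breaking is a welcome clarification that the paper leaves implicit, but otherwise the two proofs coincide.
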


\begin{proof} 

\textit{\textbf{Phase 1.}} Since we are in the population case, the conditional independence tests will perfectly recover conditional independences in the distribution entailed by the underlying t-DAG $D_T$.
Since we assume faithfulness, this translates into the perfect recovery of $D_T$'s skeleton.

\textit{\textbf{Phase 2.}} Let the input $G$ of \cref{alg:tpc-naive} be such a perfect skeleton. Also, let the input $S_{ij}$ be the set of variables that render $X_i$ and $X_j$ conditionally independent in the distribution. Since \cref{alg:tpc-naive} iterates over all disconnected forks in the skeleton and has access to all ground-truth $S_{ij}$, all v-structures and two-type forks will be correctly identified. Their edges will be oriented correctly in $G'$ via the \texttt{Orient} or \texttt{OrientTEdge} functions. If such edges are part of t-edges, the whole t-edge will be oriented in $G'$.
The orientation of such t-edges is guaranteed to be correct, otherwise, $D_T$ would violate type consistency.

\textit{\textbf{Phase 3.}} We, therefore, have a partially oriented DAG $G'$, that has the same type mapping, v-structures, and skeleton as $D_T$. 
Furthermore, we know that $D_T \subseteq G'$, since the input at Phase (2) is the true skeleton of $D_T$ (fully undirected) and we have shown that all oriented edges in $G'$ will be correct w.r.t.\ $D_T$.
Hence, from \cref{thm:tmeek}, we know that applying t-Propagation to $G'$ is guaranteed to return $D^*_T$, our desired output.
\end{proof}

\begin{theorem}[Consistency of TPC-majority to the t-MEC]\label{thm:tpc_majority}
Assuming that the causal Markov property holds, under the faithfulness and causal sufficiency assumptions (see \cref{sec:background}), given a suitable conditional independence test, and in the limit of infinite samples from the distribution entailed by a t-DAG $D_T$, the TPC-majority algorithm is guaranteed to recover the t-essential graph $D^*_T$.
\end{theorem}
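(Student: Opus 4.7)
The plan is to mirror the proof of \cref{thm:tpc_naive} for the three phases, with the bulk of the work concentrated on Phase (2), where the majority-based orientation procedure differs from the naive one. Phase (1) is identical to that of TPC-naive: by faithfulness and the assumed suitability of the conditional independence test, in the population limit the skeleton of $D_T$ is recovered exactly, and the separating sets $S_{ij}$ are correct witnesses of the conditional independences entailed by $D_T$.

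For Phase (3), once Phase (2) produces a partially oriented graph $G'$ that shares its skeleton, v-structures, and type mapping with $D_T$ and satisfies $D_T \subseteq G'$, we can simply invoke \cref{thm:tmeek} to conclude that t-Propagation returns $D^*_T$. So the proof reduces to establishing that the output of Phase (2) of TPC-majority satisfies exactly these properties.

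The heart of the argument is therefore Phase (2), and splits into two observations about \cref{alg:tpc-majority} in the population case. First, because $S_{ij}$ is a correct separating set for every non-adjacent pair, every disconnected fork $(v_i, v_j, v_k)$ processed in the loop is classified correctly as a true v-structure ($v_k \notin S_{ij}$) or as a non-collider ($v_k \in S_{ij}$); in particular, a two-type fork detected by the algorithm is a genuine structure of type described in \cref{prop:mickeymouse}. Second, because $D_T$ is type-consistent, every such detected v-structure or two-type fork implies the \emph{same} orientation for the associated t-edge as the one present in $D_T$. Consequently, the evidence counter $E[t_i, t_j]$ is only ever incremented for the true orientation of each t-edge, and $E[t_j, t_i] = 0$ throughout: the $\text{argmax}$ therefore picks the correct t-edge orientation at every iteration, and the conditional single-type orientations stored in $C_{t_i, t_j}$ are triggered only when their parent t-edge is in fact oriented, which again is always in the correct direction. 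The \texttt{while} loop terminates when no further evidence is found, at which point all t-edges corresponding to a genuine v-structure or two-type fork in $D_T$ have been oriented correctly. Step~2 then orients the remaining single-type v-structures directly, again using the correctness of $S_{ij}$.

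The main obstacle, and what distinguishes this proof from that of TPC-naive, is arguing that the iterative, ordering-dependent majority scheme cannot make a bad choice in the infinite-sample limit; the previous paragraph handles this by showing that the ``evidence'' collected at every iteration is in fact unanimous in the population case, so majority voting degenerates to unanimous voting and no erroneous t-edge is oriented. Combining these observations, the graph $G'$ passed to Phase~(3) has the true skeleton and all true v-structures of $D_T$ (multi-type ones oriented in Step~1 via their t-edges, single-type ones in Step~2), with no spurious orientations, hence $D_T \subseteq G'$ and the hypotheses of \cref{thm:tmeek} are met, yielding $D^*_T$ as output.
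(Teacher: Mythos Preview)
Your proposal is correct and follows essentially the same approach as the paper's proof: Phase~(1) yields the exact skeleton and separating sets, Phase~(2) is analyzed by observing that in the population case the evidence counters $E[\cdot,\cdot]$ can only be nonzero in the direction consistent with $D_T$ (so the majority vote is unanimous and never errs), the conditional single-type orientations in $C_{t_i,t_j}$ and the single-type v-structures in Step~2 are then handled correctly, and Phase~(3) concludes via \cref{thm:tmeek}. Your remark that ``majority voting degenerates to unanimous voting'' is exactly the paper's observation that $E[T(v_i),T(v_k)]>0$ and $E[T(v_k),T(v_i)]>0$ cannot both hold without violating type consistency of $D_T$.
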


\begin{proof}

\textit{\textbf{Phase 1.}} Since we are in the population case, the conditional independence tests will perfectly recover conditional independences in the distribution entailed by the underlying t-DAG $D_T$.
Since we assume faithfulness, this translates into the perfect recovery of $D_T$'s skeleton.

\textit{\textbf{Phase 2.}}
\begin{itemize}
\item The input $G$ to \cref{alg:tpc-majority} will be the exact skeleton of $D_T$ and the $S_{ij}$ will be the exact set of variables that render $X_i$ and $X_j$ conditionally independent in the distribution.
\item \textit{Step 1.} \cref{alg:tpc-majority} starts by iterating through all multi-type disconnected forks. Since the $S_{ij}$ and the skeleton are perfect, all v-structures and two-type forks will be correctly identified. Notice that, it is impossible that both $E[T(v_i), T(v_k)]$ and $E[T(v_k), T(v_i)]$ are greater than 0 since otherwise, $D_T$ would violate type consistency.
Hence, all t-edges involved in a v-structure or a two-type fork will be oriented correctly.
Moreover, \cref{alg:tpc-majority} keeps track of any single-type edge that is involved in a v-structure with a multi-type edge (via $C_{t_i, t_j}$) and orients it correctly.
Assigning an alternative orientation to such edges would contradict the fact that there is a v-structure in the graph, which we know with certainty.

\item \textit{Step 2.} The last step of \cref{alg:tpc-majority} iterates through all single-type disconnected forks. Since the $S_{ij}$ and the skeleton are perfect, all v-structures will be identified correctly and their edges will be oriented accordingly.
\end{itemize}

\textit{\textbf{Phase 3.}} We, therefore, have a partially oriented DAG $G'$, that has the same type mapping, v-structures, and skeleton as $D_T$. 
Furthermore, we know that $D_T \subseteq G'$, since the input at Phase (2) is the true skeleton of $D_T$ (fully undirected) and we have shown that all oriented edges in $G'$ will be correct w.r.t.\ $D_T$.
Hence, \cref{thm:tmeek}, we know that applying t-Propagation to $G'$ is guaranteed to return $D^*_T$, our desired output.
\end{proof}

\FloatBarrier
\section{Experiments}\label{app:experiments}

\subsection{Data sets} \label{app:datasets}
\subsubsection{Synthetic data sets} \label{app:synthetic_data}
Consistent t-DAGs are generated according to the generative process described in \cref{sec:empirical_validation}. From these graphs, data are generated according to the following SCM:
$$ X_j := f_j(X_{\text{pa}_j^{G}}, N_j) , \forall j$$
For each variable $j$, the noise variables $N_j$ are mutually independent and sampled from $\mathcal{N}(0, \sigma_j^2)\ \ \forall j$, where $\sigma_j^2 \sim \mathcal{U}[0.01, 0.02]$. If the variable is a source, then $\sigma_j^2 \sim \mathcal{U}[1, 2]$. The function $f_j$ and the sampling of their parameters is described in details for each type of function:

\begin{itemize}
    \item The \textit{linear} data sets are generated following  $X_j := w_j^T X_{\text{pa}_j^G} + N_j$  where  $w_j$ is a vector of $|\pi_j^{G}|$ coefficients each sampled uniformly from $[-1, -0.25]\cup[0.25,1]$ (to make sure there are no $w$ close to 0).
    
    \item The \textit{additive noise model} (ANM) data sets are generated following  $X_j := f_j(X_{\text{pa}_j^G}) + N_j$  where the functions $f_j$ are fully connected neural networks with one hidden layer of $10$ units and \textit{leaky ReLU} with a negative slope of $0.25$ as nonlinearities. The weights of each neural network are randomly initialized from $\mathcal{N}(0, 1)$.
    
    \item The \textit{nonlinear with non-additive noise} (NN) data sets are generated following  $X_j := f_j(X_{\pi_j^G}, N_j)$ , where the functions $f_j$ are fully connected neural networks with one hidden layer of $20$ units and \textit{tanh} as nonlinearities. The weights of each neural network are randomly initialized from $\mathcal{N}(0, 1)$. 
\end{itemize}

\subsubsection{Pseudo-real data sets}\label[appendix]{app:sec-pseudoreal}
To generate the data, we used the \href{https://pypi.org/project/bnlearn/}{bnlearn} Python package (Version 0.4.3) and conditional probability tables taken from the \href{https://www.bnlearn.com/bnrepository/}{Bayesian Network Repository} (\texttt{.bif} files). For all Bayesian networks, the graph structure and the conditional probability come from real-world data sets. 
Inspired by the method used by \citet{constantinou2021information} to simulate tiered background knowledge, we assigned types to variables by randomly partitioning the topological ordering of the DAGs into groups. The size of the groups can vary, but their expected size is fixed. Note that such types do not necessarily indicate similarity in the nature of the variables, but they do capture the fact that one variable precedes the other. In \cref{fig:app-insurance-dataset} we show an example of type assignments (expected group size: $5$) for the \texttt{insurance} Bayesian network, where each color represents a type.

\begin{figure}
    \centering
    \includegraphics[width=\textwidth]{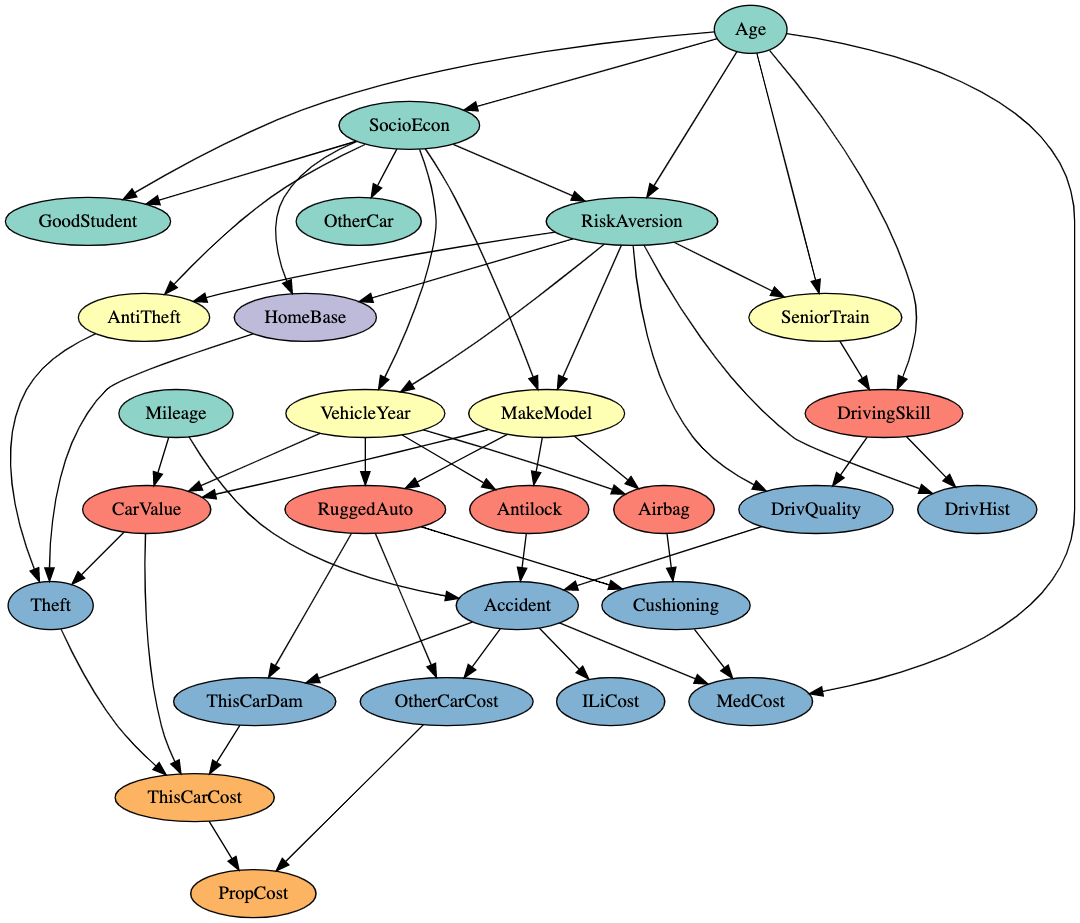}
    \caption{An example t-DAG for the \texttt{Insurance} Bayesian network with an expected \emph{number of variables per type} of $5$. The colors indicate the types.}
    \label{fig:app-insurance-dataset}
\end{figure}

\subsection{SHD with respect to the ground truth t-essential graph}\label{app:sec-shd-tess}
The Structural Hamming Distance (SHD) \citep{tsamardinos2006max} is the number of incorrect edges between two graphs (either superfluous, missing, or reversed edges). In our case, we compare the output of the algorithms to what is identifiable from the data: the ground-truth t-essential graph.

\subsection{Additional experiments} \label{app:additional_exp}

We report additional experiments with the same settings as the main text, but where t-DAGs with a different number of vertices and types are considered. We show, in~\cref{fig:app-simulated-types3,,fig:app-pseudoreal-3types}, results on simulated and pseudo-real data with 3 types (instead of 5). In~\cref{fig:app-simulated-50nodes} we report the result on the simulated data with 50 vertices (instead of 20). For each method, $10$ repeats were performed.

Overall, the main conclusions remain unchanged. Except for the data set $(0.2, 0)$ where PC + t-propagation seems to perform particularly well, the results on data sets with 3 types are similar to the 5-types data sets. For the simulated data with 50 vertices, the difference with the PC baseline seems accentuated, in line with our theoretical results.

Note that the boxplot whiskers represent $\text{Q1} - 1.5 \cdot \text{IQR}$ and $\text{Q3} + 1.5 \cdot \text{IQR} $, where Q1 and Q3 are the first and third quartiles and IQR is the interquartile range. Full results are available in the code repository: \url{https://github.com/ElementAI/typed-dag}.

\begin{figure}
    \centering
    \includegraphics[width=\textwidth]{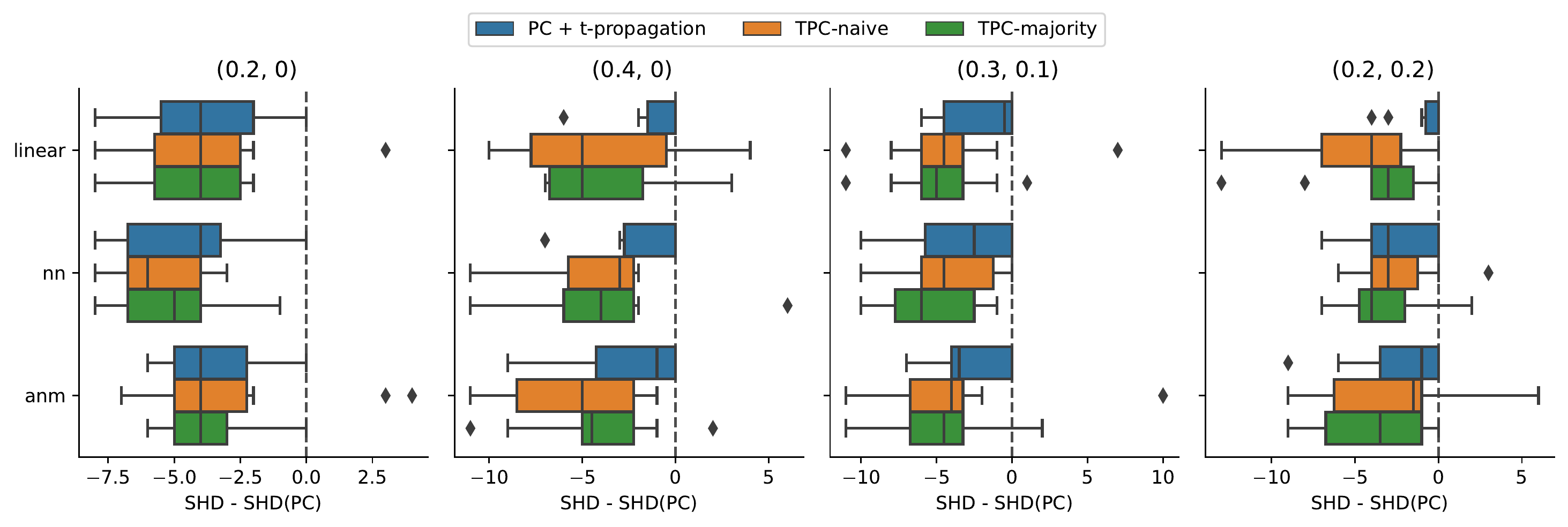}
    \caption{\vspace{-4mm}Results on simulated data with 3 types shown as improvement in SHD w.r.t. the PC baseline (lower is better). The title of each subplot indicates the ($p_{\text{inter}}, p_{\text{intra}}$) configuration. 
    }
    \label{fig:app-simulated-types3}
\end{figure}

\begin{figure}
    \centering
    \includegraphics[width=\textwidth]{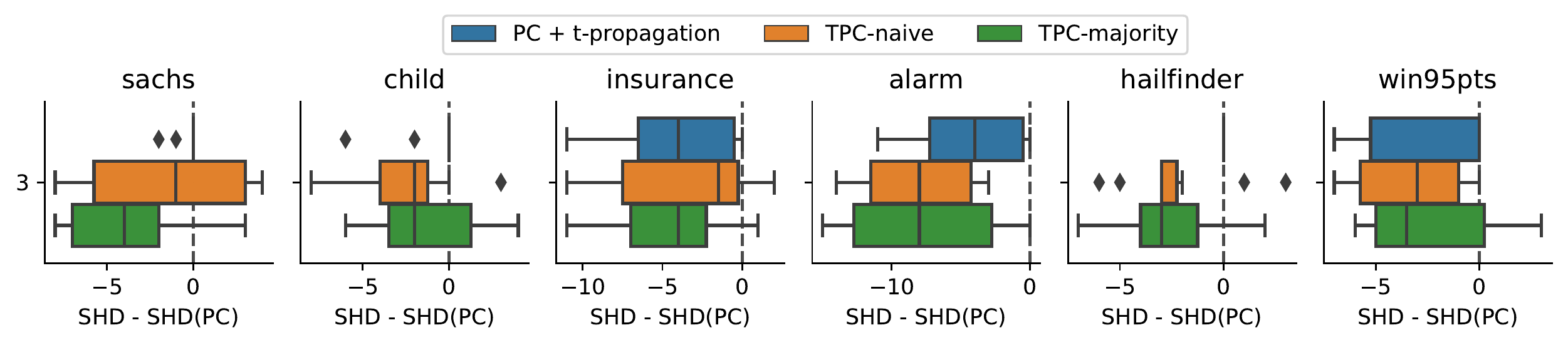}
    \caption{\vspace{-4mm}Results on pseudo-real data with 3 types shown as improvement in SHD w.r.t. the PC baseline (lower is better).  
    }
    \label{fig:app-pseudoreal-3types}
\end{figure}

\begin{figure}
    \centering
    \includegraphics[width=\textwidth]{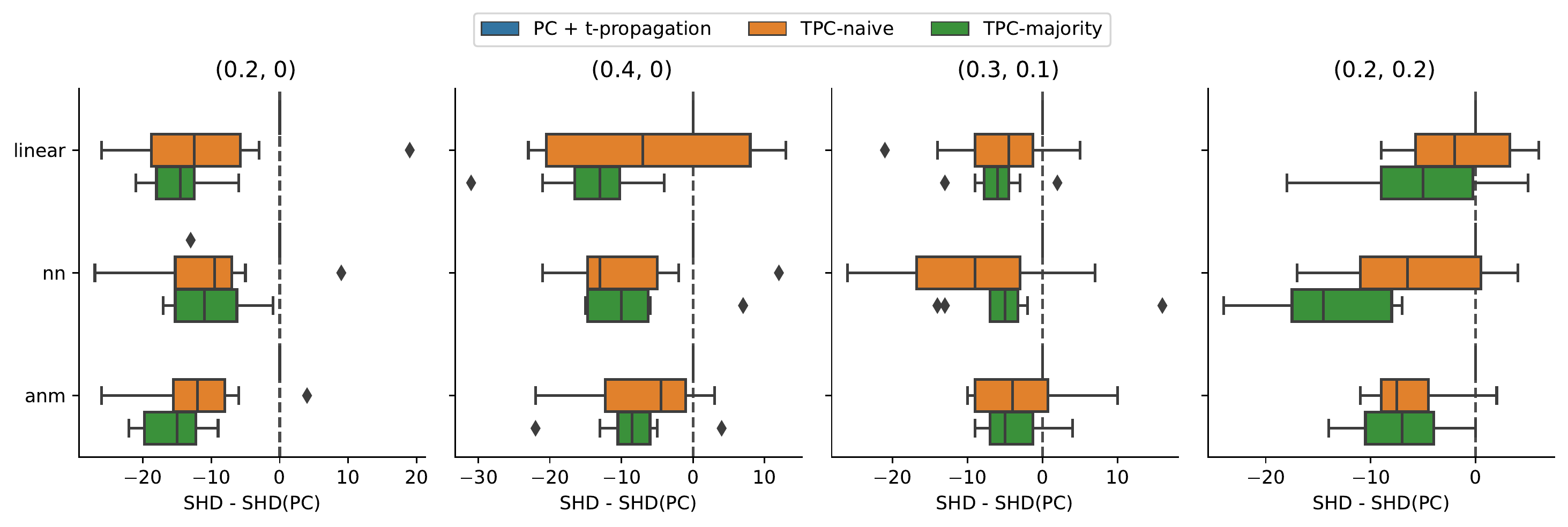}
    \caption{\vspace{-4mm}Results on simulated data with 50 vertices shown as improvement in SHD w.r.t. the PC baseline (lower is better). The title of each subplot indicates the ($p_{\text{inter}}, p_{\text{intra}}$) configuration. 
    }
    \label{fig:app-simulated-50nodes}
\end{figure}

\subsection{Hyperparameters}

The algorithms compared in this work all have the same hyperparameters. For each algorithm, we use the FIT~\citep{chalupka2018fast} as the conditional independence test with $\alpha = 0.01$. We use the default parameters for FIT, as defined in the \href{https://pypi.org/project/fcit/}{fcit} Python package (Version 1.2.0).

\section{Notes to practitioners} \label{app:note_practitioners}

\subsection{Typing variables without full knowledge of causation between types}
\label{app:examples_ordering_unknown}
It is reasonable to assume that experts who are able to attribute types to variables may have some \emph{a priori} intuition about how these types are related.
For example, \citet{shen2020challenges} claim that ``edges from biomarkers or diagnosis to demographic variables are prohibited''.
Our framework is compatible with this setting, in which the orientation of some t-edges may be known a priori. 
However, our results also hold in the case where such prior knowledge is either \emph{unavailable}, \emph{incomplete}, or \emph{unreliable}. 
We give two examples of practical settings where this may arise.

\paragraph{Example 1 - Time-related types with missing information.} Consider a setting where variables are collected over multiple days. This corresponds to a typical tiered background knowledge setting, where we let tiers correspond to types, i.e., the day on which variables were measured (e.g., Day 1 variables, …, Day n variables). Now, consider the case where the exact date of measurement for some types is missing (or unreliable), e.g., for privacy consideration. That is, we know which variables were collected simultaneously, but we do not know when. While it is not possible to order such types with respect to the others, we know that their inter-type causal relationships follow the arrow of time and that all variables in a type are at the same position in time. In contrast with standard tiered background knowledge, our framework allows for the inclusion of such partial information.

\paragraph{Example 2 - Entities as types.} In some use-cases, types could be used to represent distinct entities, each characterized by multiple measured variables. For example, such entities could be devices on a network for which telemetry data is available, users in a social network and their behavioral characteristics, or machines in a manufacturing production line and their input/output quantities (akin to \cite{marazopoulou2016causal}). In such settings, the nature of the entities is not necessarily sufficient for an expert to have an intuition about their ordering. For example, we may know that two sets of variables are from two distinct network devices, but not how these devices are related within the network topology. When it is reasonable to assume that entities interact in a directional manner (e.g., producer/consumer, influencer/follower relationships), our typing assumptions can be applied by grouping the variables of each entity into a distinct type.

\subsection{Variations and relaxations of the proposed typing assumption} \label{app:relaxations}

Here, we briefly touch on two variations of our proposed theoretical framework: 1) the case where the types of some variables are unknown and 2) the case where type consistency does not apply to a subset of the variables.

\paragraph{The types of some variables are unknown. } This case is adequately covered by our current theoretical framework. Variables with unknown types should be assigned to a unique type of which they are the only instance. This ensures that the type consistency assumption still constrains their interactions with other types. Alternatively, one could think of assigning all variables with unknown types to a single (common) type. However, this is incorrect, since it forces all such variables to interact with other types in the same direction.

\paragraph{Type consistency does not apply to some variables.}  This case requires a minor extension of our framework, which we have chosen to leave out to avoid complicating the presentation. One would need to add an ``exclusion set'' to the definition of t-DAGs (\cref{def:tdag}), which would contain vertices (i.e., variables) to which the type consistency assumption would not be applied in \cref{def:consistent tdag}. \cref{thm:tedges_zero} would still hold in this case, but the statement of \cref{cor:identification_pintra_zero} would need to be restricted to the case where the exclusion set is empty.

\end{document}